\documentclass{article} 
\usepackage{iclr2026_conference,times}

\usepackage{amsmath,amsfonts,bm}

\def\eqref#1{equation~\ref{#1}}

\def\1{\bm{1}}

\DeclareMathAlphabet{\mathsfit}{\encodingdefault}{\sfdefault}{m}{sl}
\SetMathAlphabet{\mathsfit}{bold}{\encodingdefault}{\sfdefault}{bx}{n}

\newcommand{\E}{\mathbb{E}}

\usepackage{amsmath,amssymb,amsfonts}
\usepackage{amsthm}
\usepackage{url}
\usepackage{graphicx}
\usepackage{wrapfig}
\usepackage{graphicx}
\usepackage{subcaption}
\usepackage{caption}
\usepackage[colorlinks]{hyperref}
\hypersetup{citecolor=[HTML]{3366CC}} 
\usepackage{booktabs}
\usepackage{multirow}
\usepackage{wrapfig}
\usepackage{xspace}
\usepackage{url}
\theoremstyle{plain}
\newtheorem{theorem}{Theorem}[section]
\newtheorem{proposition}[theorem]{Proposition}

\theoremstyle{definition}

\usepackage{minitoc}
\usepackage{etoc}

\newcommand{\gray}[1]{\textcolor{gray!80}{#1}}

\usepackage{minitoc}

\usepackage[utf8]{inputenc} 
\usepackage[T1]{fontenc}    
\usepackage{hyperref}       
\usepackage{url}            
\usepackage{booktabs}       
\usepackage{amsfonts}       
\usepackage{nicefrac}       
\usepackage{microtype}      
\usepackage{xcolor}
\usepackage[dvipsnames]{xcolor}         
\usepackage{soul}

\usepackage[most]{tcolorbox}
\tcbuselibrary{skins}     
\tcbuselibrary{breakable} 
\tcolorboxenvironment{proposition}{
    colback=black!3,       
    colframe=black!40,     
    arc=1mm,               
    boxrule=1pt,           
    fonttitle=\bfseries,   
    top=0.5mm,        
    bottom=0.5mm,
    left=2mm,              
    attach boxed title to top left={yshift=-2mm, xshift=4mm},
    boxed title style={
        colback=black!40,
        arc=1mm,
    }
}

\renewtcolorbox{remark}[1]{
  enhanced,
  colback=LightSteelBlue!30,
  colframe=black!30,
  boxrule=0.3pt,
  width=\textwidth,
  before skip=0pt, 
  after skip=0pt,
  fonttitle=\bfseries,
  coltitle=white,
  title=#1,
  attach boxed title to top left={
    yshift=-2.5mm,
    xshift=3mm
  },
  boxed title style={
    colback=DarkRed,
    sharp corners,
    boxrule=1pt,
    top=1pt, bottom=1pt, left=2pt, right=2pt
  }
}

\title{Influence-Preserving Proxies for Gradient-Based Data Selection in LLM Fine-tuning}

\author{Sirui Chen$^*$, Yunzhe Qi\thanks{Equal Contribution.},~ Mengting Ai, Yifan Sun, Ruizhong Qiu, Jiaru Zou,  Jingrui He \\
University of Illinois Urbana-Champaign\\
\texttt{\{sirui6, yunzheq2, mai10, yifan50, rq5, jiaruz2, jingrui\}@illinois.edu} \\
}

\newcommand{\method}{\textsc{IProX}\xspace}

\iclrfinalcopy 
\begin{document}

\maketitle

\begin{abstract}
Supervised fine-tuning (SFT) relies critically on selecting training data that most benefits model's downstream performance. Gradient-based data selection methods such as TracIn and Influence Functions leverage influence to identify useful samples, but their computational cost scales poorly, making them impractical for multi-billion-parameter large language models (LLMs).
A common alternative is to use off-the-shelf smaller models as proxies, but they remain suboptimal since their learning dynamics are unclear, their sizes cannot be flexibly adjusted, and they cannot be further aligned with the target model in terms of gradient-based influence estimation. 
To address these challenges, we introduce \method, a two-stage framework that derives influence-preserving proxies directly from the target model. It first applies a low-rank compression stage to preserve influence information of the target model, and then an aligning stage to align both model gradients and logits, thereby constructing proxies that flexibly control computational cost while retaining the target model’s influence.
Experimental results across diverse LLM families and evaluation tasks show that \method consistently outperforms off-the-shelf proxies and baseline methods. On Qwen3-4B, a 1.5B proxy constructed with \method achieves stronger performance than the larger 1.7B off-the-shelf proxy. Notably, on Llama3.2, \method achieves better performance than baselines while reducing computational cost by more than half relative to the full 3B model. These results show that \method provides effective influence-preserving proxies, making gradient-based data selection more scalable for LLMs. The code is available at \url{https://github.com/csr16/IProX}

\end{abstract}
\section{Introduction}
Supervised fine-tuning (SFT) has become the standard approach for adapting Large Language Models (LLMs) to various downstream tasks. However, the effectiveness of SFT hinges critically on the training data. Prior studies~\citep{wang2023far,wang2024demystifying} show that naively combining datasets can even degrade downstream performance. The key challenge, therefore, is not the sheer amount of data available but the identification of a curated subset that most effectively enhances model performance.

A prominent line of work addressing this challenge is gradient-based data selection, where each sample’s importance is estimated through its influence on the model performance. For example, \emph{TracIn}~\citep{pruthi2020estimating,xia2024less,han2023understanding} estimates the impact of a training sample by accumulating gradient inner products with a validation sample across multiple model checkpoints, while \emph{Influence Functions}~\citep{koh2017understanding,kwondatainf,zhang2024harnessing,wang2025better} approximate the effect of infinitesimally upweighting or downweighting a training sample by scaling its gradient with the inverse Hessian to account for the local curvature of the loss landscape. Despite their success, both methods impose substantial computational overhead, requiring either the storage of numerous checkpoints with repeated backpropagation or the computation of costly inverse-Hessian vector products. This overhead scales poorly with model size, making these methods impractical for multi-billion-parameter LLMs~\citep{grosse2023studying}. 

While there are some efforts focusing on simplifying the influence computation itself~\citep{kwondatainf, yu2024mates, xia2024less, linefficient}, we pivot to an alternative, orthogonal question: 
\textit{can the expensive influence calculation for a target model be effectively offloaded to a smaller, cost-effective proxy model?}
The idea of using smaller models to predict the behavior of larger ones is already prevalent, most notably through scaling laws that estimate a target model's performance from its smaller counterparts~\citep{kaplan2020scaling,shum2025predictive,zenglensllm,lin2025evosld,zou2025transformer}. 
Motivated by this, we explore whether this proxy paradigm can also be extended to data selection by leveraging gradient-based influence scores from smaller models as approximations for larger ones, thereby mitigating the prohibitive cost of full-scale computation.

A direct strategy is to use off-the-shelf proxy models~\citep{xia2024less,yang2024smalltolarge}, such as applying Llama3-8B to select data for Llama3-70B. These proxies provide strong baselines and useful guidance, but remain suboptimal for three main reasons. First, while their task performance is usually reported, much less is known about their learning dynamics on the data. As a result, choosing an off-the-shelf proxy for gradient-based influence estimation typically relies on prior knowledge (e.g., assuming the larger model always behaves similarly to its smaller counterparts), without a clear understanding of how much benefit is gained by increasing size. Second, the available off-the-shelf models within each family are restricted to a handful of fixed sizes, which limits flexibility in adjusting proxy capacity to different computational budgets. Third—and most importantly, there is no systematic way to better align these proxies with the target model for influence estimation.

\begin{wrapfigure}{r}{0.28\textwidth} %
  \vspace{-10pt}
  \setlength{\abovecaptionskip}{6pt}
  \centering
    \begin{subfigure}{1.0\linewidth}
    \includegraphics[width=\linewidth]{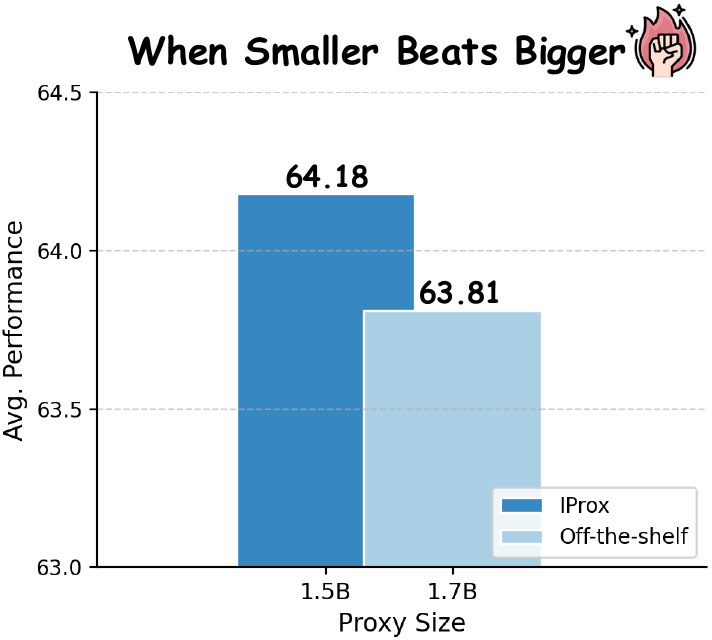}
  \end{subfigure}\hfill
  \caption{For Qwen3-4B, a 1.5B \method~outperforms the Qwen3-1.7B off-the-shelf proxy, demonstrating that a smaller influence-pre-serving proxy can achieve better data selection performance.}
  \label{fig:teaser}
  \vspace{-12pt}
\end{wrapfigure}
To address these challenges, we propose \method, a principled two-stage framework that constructs a proxy directly from the target model, starting with compression and followed by alignment.
The key idea is straightforward: instead of relying on a smaller model with assumed preferences, we derive a smaller model directly from the target so that it inherits the gradient characteristics of the original. This design provides flexibility in controlling computational cost and, more importantly, establishes a principled path to preserve the influence of the target model. Concretely, we first employ \emph{Influence-Preserving Singular Value Decomposition (IPSVD)}, where each weight matrix of the target model is compressed to retain components most relevant for gradient-based influence. 
Building on this, we then introduce an \emph{aligning stage} that refines the proxy by matching its gradients to those of the target model within the low-rank space while anchoring its output logits to remain consistent. Together, these stages yield a proxy that is both efficient and tailored for gradient-based data selection.

Experimental results demonstrate that \method\ achieves consistently better performance than off-the-shelf proxies across diverse tasks and model families, and its advantages hold under different gradient-based influence estimators.
A representative example is shown in Fig.~\ref{fig:teaser}, where for the Qwen3-4B target model, our 1.5B proxy constructed by \method\ surpasses a larger 1.7B off-the-shelf proxy in average performance, highlighting that a smaller \method can outperform larger off-the-shelf ones. In addition to stronger performance, \method is efficient. In our experiments on Llama3.2, it reduces the computational overhead by more than half relative to the full 3B model, offering a practical and scalable path for efficient gradient-based data selection in LLM fine-tuning.

\section{Related Works}
\textbf{Gradient-Based Data Selection.} \quad
Traditional data selection is often \emph{model-agnostic}, emphasizing dataset structure such as diversity~\citep{liu2024tsds}, redundancy reduction~\citep{wei2015submodularity}, or similarity to a target distribution~\citep{kangget}. In contrast, \emph{gradient-based} selection is \emph{model-aware}, leveraging first- and second-order training dynamics to estimate each example’s impact on learning and validation performance. A main line builds on \emph{influence functions}~\citep{kwondatainf,grosse2023studying,klochkov2024revisiting}, which approximate the leave-one-out effect of upweighting or removing a training example, helping identify high-impact data points with respect to a target distribution.
Another complementary line uses \emph{training-dynamics} attribution~\citep{bae2024training}, scoring examples by how their gradients align with improvements on held-out targets across training checkpoints. A representative approach is TracIn~\citep{pruthi2020estimating,han2023understanding,xie2024data}, which aggregates such gradient-similarity signals along the optimization trajectory, avoiding expensive second-order computations while remaining model-aware.

\textbf{Efficient Data Selection for LLMs.} \quad
With the growing size of LLMs, gradient-based data selection has become increasingly impractical, motivating more efficient adaptations. Some works reduce the cost of influence estimation by simplifying second-order derivatives~\citep{kwondatainf,grosse2023studying,zhang2024harnessing}, while others compute influences on a small subset and extrapolate to the full dataset~\citep{xia2024less,yu2024mates,gu2024data,linefficient}. Recently, an alternative line of work has explored using smaller off-the-shelf proxy models to guide data selection for larger ones, though these approaches primarily rely on loss signals rather than exploiting gradient information~\citep{yang2024smalltolarge,shum2025predictive}. In the broader context of efficient LLM adaptation, recent studies also leverage fine-tuning dynamics~\citep{zenglensllm} and automated scaling laws~\citep{lin2025evosld} to optimize computational allocation.

\textbf{LLM Compression via Decomposition Methods.} \quad
Decomposition-based compression exploits the low intrinsic rank of weight matrices. Early work showed that singular value decomposition (SVD) can effectively approximate transformer layers~\citep{Ganesh_2021}. Subsequent studies refined this idea: ASVD incorporates neuron activation patterns~\citep{yuan2025asvdactivationawaresingularvalue}, CALDERA combines low-rank factorization with quantization~\citep{saha2024compressinglargelanguagemodels}, and MoDeGPT applies Nyström approximation to entire transformer blocks~\citep{lin2025modegptmodulardecompositionlarge}. SVD-based strategies have also been extended to Mixture-of-Experts models~\citep{Ai_2025,yang2024moei2compressingmixtureexperts,li2025moesvd}. Additionally, ShortGPT introduces an importance-scoring mechanism to identify and retain the most critical layers~\citep{men2024shortgptlayerslargelanguage}.

\vspace{-5pt}
\section{Preliminaries and Problem Definition}
\label{sec:prelim}
\vspace{-5pt}
We consider a candidate training dataset $\mathcal{D}_\text{train}$ and a target validation dataset $\mathcal{D}_\text{val}$, which may either follow the same distribution or a different one. The objective of \emph{model-aware data selection} is to identify a subset $\mathcal{D}^* \subseteq \mathcal{D}_\text{train}$ with a fixed budget $k$ such that fine-tuning a model $f_\theta$ on $\mathcal{D}^*$ maximizes its downstream performance on $\mathcal{D}_\text{val}$:
\begin{equation}
\label{eq:data-sel}
\mathcal{D}^* \;=\; \underset{\mathcal{D} \subseteq \mathcal{D}_\text{train},\,|\mathcal{D}|=k}{\arg\max}\;
\mathbb{E}_{z'\sim \mathcal{D}_\text{val}}
\big[\mathcal{U}(f_{\theta(\mathcal{D})};z')\big],
\end{equation}
where $\mathcal{U}$ is a task utility (e.g., accuracy), $\theta(\mathcal{D})$ are the model parameters fine-tuned on $\mathcal{D}$, 
and $z' \in \mathcal{D}_\text{val}$ is a validation sample.
Directly solving the combinatorial optimization in Eq.~\ref{eq:data-sel} is intractable. A widely used strategy is to instead score each training sample $z \in \mathcal{D}_\text{train}$ based on its \emph{gradient-based influence}  on $\mathcal{D}_\text{val}$ and select the top-$k$ samples. This is typically achieved by defining a pairwise influence score $I(z, z')$, which quantifies the utility of training on a sample $z$ for the model's performance on a target sample $z'$. 


A prominent example of this idea is \emph{TracIn}~\citep{pruthi2020estimating}, which approximates $I(z, z')$ by accumulating gradient similarities between training and target samples over multiple checkpoints:
{
\begin{equation}
    I_\text{TracIn}(z, z') = \sum_{t=1}^{T} \eta_t \langle \nabla_\theta L(z; \theta_t), \nabla_\theta L(z'; \theta_t) \rangle,
\end{equation}
where $L(\cdot\, ;\ \cdot)$ is the loss function, $\theta_t$ is the model's parameters at checkpoint $t$ and $\eta_t$ is the averaged learning rate in iteration $t$. By probing the geometry of the loss landscape throughout training, this method provides a faithful measure of a sample's utility. Another seminal method, \emph{Influence Functions}~\citep{koh2017understanding}, estimates the influence of a training sample by modeling how the final model parameters would change if that sample were infinitesimally upweighted. This parameter change is approximated as the inverse Hessian of the loss multiplied by the sample’s gradient.


However, the computational cost of these gradient-based methods is prohibitive for large-scale models, motivating the use of smaller proxies to approximate influence scores. The central challenge, and the focus of this work, is to design a proxy model $f_{\theta'}$ that not only approximates the influence scores of the target model $f_\theta$ but also strikes a balance between efficiency and selection quality. Ideally, the proxy should be small enough to offer notable computational savings while remaining sufficiently aligned with the target model to guide effective data selection.
\vspace{-5pt}
\begin{figure}[!t]
    \vspace{-3pt}
    \centering
    \setlength{\abovecaptionskip}{2pt}
    \setlength{\belowcaptionskip}{-4pt}
    \includegraphics[width=0.78\linewidth]{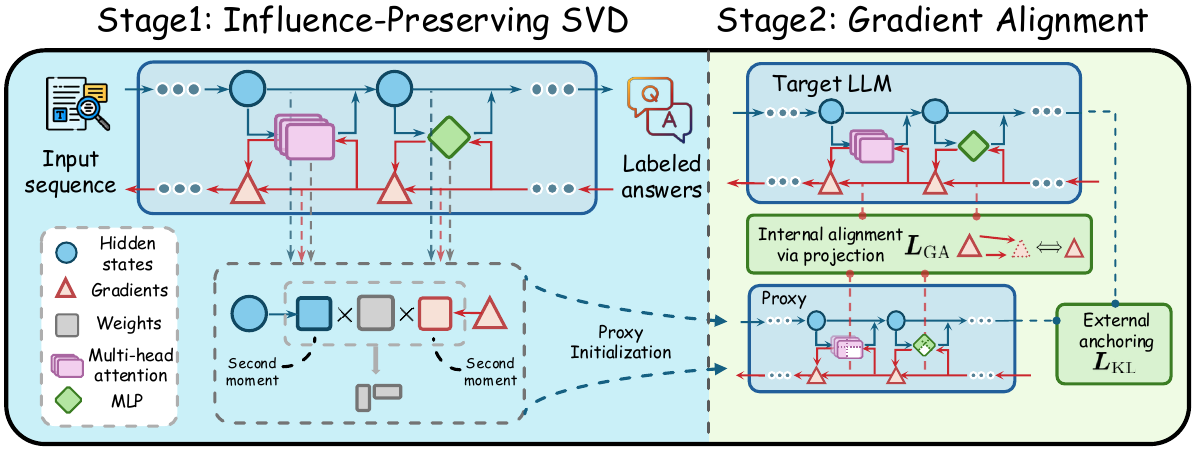}
    \caption{Overview of \method. In the first stage (left), \emph{IPSVD} leverages hidden states and gradients to build second-moment matrices that reweight the model weights for proxy initialization. In the second stage (right), the proxy is further aligned with the target LLM through internal gradient alignment in the low-rank space and external logits anchoring for stability.}
    \label{fig:main_fig}
    \vspace{-10pt}
\end{figure}
\section{Proxy Construction via Influence-Preserving Compression}
\label{sec:method}

We introduce \method, summarized in Fig.~\ref{fig:main_fig},  which consists of two stages. The first stage compresses the model with an influence-preserving SVD (\S\ref{sec:ip_svd}) that uses second-moment reweighting to retain influence-relevant components. The second stage aligns the proxy with the target LLM (\S\ref{sec:alignment}) by matching gradients in the low-rank space and anchoring the logits distribution for stability.

\subsection{Stage 1: Influence-Preserving SVD}
\label{sec:ip_svd}

\begin{wrapfigure}{r}{0.3\textwidth} %
  \vspace{-16pt}
  \setlength{\abovecaptionskip}{-1pt}
  \centering
    \begin{subfigure}{1.0\linewidth}
    \includegraphics[width=\linewidth]{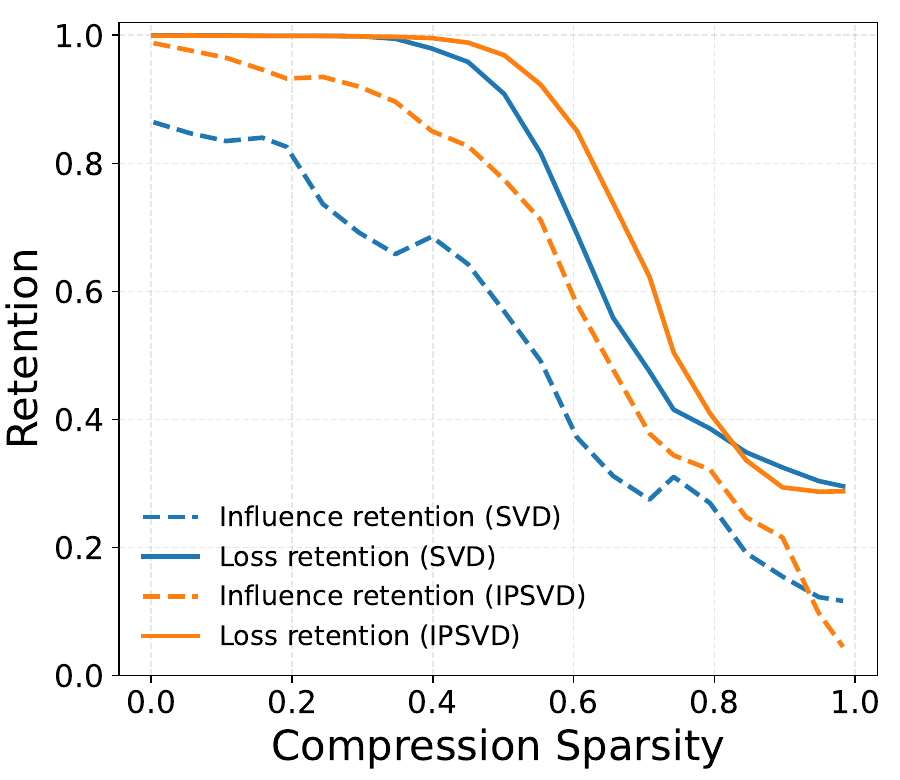}
  \end{subfigure}\hfill
  \caption{Loss and influence (TracIn) retention of SVD and our IPSVD under different compression sparsity.}
  \label{fig:svd_retention_vs_sparsity}
  \vspace{-18pt}
\end{wrapfigure}
\textbf{Limitation of Standard SVD.}\quad 
We begin by describing how the proxy model is initialized. A natural approach is to compress the model via low-rank approximation of its weight matrices. For any weight matrix $W \in \mathbb{R}^{n \times m}$ in the target model $f_\theta$, where $n, m$ are output and input dimensions, we can approximate it as $W \approx AB$, where $A \in \mathbb{R}^{n \times r}$ and $B \in \mathbb{R}^{r \times m}$. The rank $r \ll \min(n, m)$ directly controls the size of the resulting proxy model, with lower ranks corresponding to higher model sparsity. The standard method for such decomposition is Singular Value Decomposition (SVD), which yields the optimal rank-$r$ approximation under the Frobenius reconstruction error objective~\citep{eckart1936approximation,golub2013matrix}. However, this objective is misaligned with our goal of data selection, since minimizing reconstruction error provides no guarantee that the proxy model will preserve the gradient-based influence of the target model.


As illustrated in Fig.~\ref{fig:svd_retention_vs_sparsity}, when a 4-layer MLP is compressed on a synthetic classification task using standard SVD, \emph{loss retention} (measured as the ratio between the original and compressed losses) remains relatively stable when the sparsity is low, while \emph{influence retention} (measured by Spearman correlation with the oracle influence) deteriorates much more rapidly. These observations highlight the need for a compression method that explicitly preserves influence. 
To this end, our IPSVD is designed to retain influence-relevant components. As previewed in Fig.~\ref{fig:svd_retention_vs_sparsity}, IPSVD attains markedly higher influence retention than standard SVD while maintaining comparable loss retention across sparsity levels. We now present the technical details.

\textbf{IPSVD with Reweighting.}\quad
Our goal is to construct a compressed proxy whose influence scores approximate those of the target model. For clarity, we focus on a simplified variant of the TracIn computed from a single checkpoint and denote it as $I$, omitting the subscript. 
Without loss of generality, we present the analysis with TracIn, and the results also apply to other gradient-based methods such as Influence Functions (see Appendix~\ref{app:proof_lemma_if_influence_bound}).
Specifically, for a weight matrix $W_\ell$ at layer $\ell$, its gradient is given by the outer product $\nabla_{W_\ell} L(z; \theta) = \delta_\ell(z) h_{\ell-1}(z)^\top$, where $h_{\ell-1}(z)$ is the input to layer $\ell$ and $\delta_\ell(z)$ is the upstream gradient from the loss. Then the influence of $W_\ell$ is:
\begin{equation*}
    I_{W_\ell} (z, z')=\langle\nabla_{W_\ell} L(z;\theta), \nabla_{W_\ell} L(z';\theta)\rangle_{F} = \langle \delta_\ell(z), \delta_\ell(z') \rangle_{F} \langle h_{\ell-1}(z), h_{\ell-1}(z') \rangle_{F}
\end{equation*}
where $\langle \cdot,\cdot \rangle_{F}$ is Frobenius inner product.
From this definition, we observe that any sufficiently small perturbation $W_{\ell} \mapsto \widehat{W}_{\ell} = W_{\ell} + E_{\ell}$ affects the influence only through the resulting changes $\delta_{\ell}(\cdot)$. A first-order Taylor expansion of the loss with respect to $W_{\ell}$ in the direction of the perturbation $E_{\ell}$ yields the scalar $\langle \nabla_{W_{\ell}} L(z;\theta), E_{\ell} \rangle_F = \delta_{\ell}(z)^{\top} E_{\ell} h_{\ell-1}(z)$, which captures the effect of the perturbation on the sample loss. We therefore define the \emph{layer-local directional effect} of a perturbation $E_\ell$ on a sample $z$ as:
{
\setlength\abovedisplayshortskip{-12pt}%
\setlength\belowdisplayshortskip{4pt}
\setlength\abovedisplayskip{0pt}%
\setlength\belowdisplayskip{2pt}%

\begin{equation}
\label{eq:e-def}
e_\ell(z) \;\triangleq\; \delta_\ell(z)^\top E_\ell h_{\ell-1}(z).
\end{equation}
}
The following proposition provides a theoretical justification 
for using the expected squared effect, $\mathbb{E}_z[e_\ell(z)^2]$, as a tractable surrogate 
for preserving the influence score.

\begin{proposition}
\label{prop:influence-bound-tracin}
Consider a perturbation to layer $\ell$: $W_\ell \mapsto \widehat W_\ell = W_\ell + E_\ell$. Under assumptions of local smoothness, geometric coherence, and a bounded covariate shift condition between the distributions of $z$ and $z'$ (see Appendix~\ref{app:proof_proposition_tracin_influence_bound} for details), there exists a data-dependent constant $C_\kappa > 0$ such that the expected change in the influence contribution is bounded by:
\begin{equation}
\label{eq:influence-bound}
\mathbb E_{z,z'}\big|I_{\widehat W_\ell}(z, z') - I_{ W_\ell}(z, z')\big|
\;\le\;
C_\kappa \sqrt{\E_z\!\left[e_\ell(z)^2\right]}.
\end{equation}
\end{proposition}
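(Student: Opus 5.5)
The plan is to write the influence change exactly as a sum of terms that are each linear in the perturbation of the upstream gradient, and then reduce every such term to the directional effect $e_\ell$ through the assumptions. Throughout we treat $h_{\ell-1}$ as fixed under the perturbation of $W_\ell$, since the layer input does not depend on $W_\ell$. Write $\widehat\delta_\ell(z)=\delta_\ell(z)+\Delta_\ell(z)$. Because the influence factorizes, $I_{W_\ell}(z,z')=\langle\delta_\ell(z),\delta_\ell(z')\rangle\langle h_{\ell-1}(z),h_{\ell-1}(z')\rangle$, the difference becomes
\begin{equation*}
I_{\widehat W_\ell}-I_{W_\ell}=\langle h_{\ell-1}(z),h_{\ell-1}(z')\rangle\big(\langle\Delta_\ell(z),\delta_\ell(z')\rangle+\langle\delta_\ell(z),\Delta_\ell(z')\rangle+\langle\Delta_\ell(z),\Delta_\ell(z')\rangle\big).
\end{equation*}
The triangle inequality then splits $\mathbb E_{z,z'}|\cdot|$ into two cross terms and one quadratic term, which we bound separately.

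\textbf{Step 1 (local smoothness).} We linearize the upstream gradient: $\Delta_\ell(z)=J_\ell(z)[E_\ell]+O(\|E_\ell\|^2)$, where $J_\ell(z)$ is the derivative of $\delta_\ell$ with respect to $W_\ell$. Smoothness gives $\|\Delta_\ell(z)\|\le \beta\|J_\ell(z)[E_\ell]\|$ on a neighborhood of $W_\ell$. For $E_\ell$ small, this lets us absorb the quadratic term into the cross terms.

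\textbf{Step 2 (geometric coherence).} This step is where we must link $\Delta_\ell(z)$ to the scalar $e_\ell(z)=\delta_\ell(z)^\top E_\ell h_{\ell-1}(z)$, and I expect it to be the main obstacle. In general the vector $\Delta_\ell$ is not controlled by a single scalar projection. The assumption I would invoke is coherence: the weighted projection $\mathbb E_{z'}[\langle h(z),h(z')\rangle\langle\Delta_\ell(z),\delta_\ell(z')\rangle]$ is dominated, up to a constant $\kappa$, by $|\delta_\ell(z)^\top E_\ell h_{\ell-1}(z)|$. Intuitively, the perturbation of the backward signal is felt by other samples mostly along the direction $\delta_\ell(z)$ that it already moves the loss along. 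Concretely, I would state it as $|\langle\Delta_\ell(z),v\rangle|\le\kappa\,\|v\|\,|e_\ell(z)|/(\|\delta_\ell(z)\|\,\|h_{\ell-1}(z)\|)$, with norms bounded below on the support. Then the cross term with $z,z'$ drawn from the same roles is bounded by $\kappa\, M_h M_\delta\,\mathbb E_z|e_\ell(z)|$, where $M_h$ and $M_\delta$ are second-moment bounds.

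\textbf{Step 3 (covariate shift).} The second cross term involves $\Delta_\ell(z')$ with $z'\sim\mathcal D_{\rm val}$. We change measure using the bounded density ratio $\rho=d P_{z'}/dP_z$ with $\|\rho\|_{L^2(P_z)}\le B$. Cauchy--Schwarz then gives $\mathbb E_{z'}|e_\ell(z')|\le B\sqrt{\mathbb E_z e_\ell(z)^2}$. Finally, Jensen gives $\mathbb E_z|e_\ell|\le\sqrt{\mathbb E_z e_\ell^2}$, and collecting the constants yields $C_\kappa=\kappa M_hM_\delta(1+B)(1+\beta)$, which is the claimed bound.
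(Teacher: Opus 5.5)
\textbf{Gap: the step linking $\Delta_\ell$ to $e_\ell$ is assumed rather than derived.} This is the step you yourself flagged as the main obstacle. Your Step 2 hypothesis is required for every $v$. Taking $v=\Delta_\ell(z)$ shows it is equivalent to the pointwise bound $\|\Delta_\ell(z)\|\,\|\delta_\ell(z)\|\,\|h_{\ell-1}(z)\|\le\kappa\,|e_\ell(z)|$. Your first phrasing of it directly posits that the $z'$-averaged cross term is dominated by $|e_\ell(z)|$. That is essentially the pointwise form of the conclusion. Once it is assumed, the triangle inequality, Jensen and the change of measure are bookkeeping, and nothing explains why the scalar $e_\ell$ should control the vector $\Delta_\ell$.

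Step 1 does not supply this either. Smoothness gives $\|\Delta_\ell(z)-J_\ell(z)[E_\ell]\|=O(\|E_\ell\|^2)$, not $\|\Delta_\ell(z)\|\le\beta\|J_\ell(z)[E_\ell]\|$. The latter fails whenever the first-order term vanishes but the remainder does not. So Step 1 as stated is an extra hypothesis, not a consequence of smoothness.

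\textbf{The missing idea: the chain-rule structure of the layer.} The upstream gradient $\delta_\ell(z)$ depends on $W_\ell$ only through the pre-activation $u_\ell(z)=W_\ell h_{\ell-1}(z)$. So perturbing $W_\ell$ moves $u_\ell(z)$ along the single vector $E_\ell h_{\ell-1}(z)$.
\begin{itemize}
\item \emph{Smoothness.} The paper assumes a Jacobian bound $\|D_u\delta_\ell(z;u)\|_{\mathrm{op}}\le K(z)$ on the segment $u_\ell(z)+\tau E_\ell h_{\ell-1}(z)$, $\tau\in[0,1]$. The mean-value inequality then gives $\|\Delta_\ell(z)\|\le K(z)\|E_\ell h_{\ell-1}(z)\|$ exactly, with no linearization or non-degeneracy condition. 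This also bounds $\|\Delta_\ell(z')\|$ in your quadratic term. The paper sidesteps that term altogether by writing the difference as $\int_0^1\phi'(\tau)\,d\tau$ along $W_\ell+\tau E_\ell$.
\item \emph{Coherence.} This becomes a genuinely geometric condition on the angle between $\delta_\ell(z)$ and $E_\ell h_{\ell-1}(z)$, namely $|e_\ell(z)|\ge\eta\,\|\delta_\ell(z)\|\,\|E_\ell h_{\ell-1}(z)\|$. It converts $\|E_\ell h_{\ell-1}(z)\|$ into $|e_\ell(z)|/(\eta\|\delta_\ell(z)\|)$.
\end{itemize}
Together the two hypotheses give a bound of essentially the same shape as the one you posited. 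The difference is that each is a statement about primitive quantities, and the proof actually does its work at this step. Cauchy--Schwarz with the finite moment $\E_z[K(z)^2\|h_{\ell-1}(z)\|^2/\|\delta_\ell(z)\|^2]$ then replaces your lower bounds on the norms.

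\textbf{What is fine.} Your exact three-term expansion is correct. Your Step 3 is sound, and the $L^2$ density-ratio condition is arguably a cleaner, $E_\ell$-independent form of bounded covariate shift than the paper's, which directly assumes $\E_{z'}[e_\ell(z')^2]\le\kappa\,\E_z[e_\ell(z)^2]$. Either closes the argument.
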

The proof is deferred to Appendix~\ref{app:proof_proposition_tracin_influence_bound}.
Intuitively, the smoothness assumption ensures that perturbations in layer weights 
translate into proportionally bounded changes in the gradients. 
The error term $e_\ell(z)$ represents the local gradient deviation caused by $E_\ell$, 
and its squared expectation thus serves as a surrogate for bounding the discrepancy 
in pairwise influence $I_{W_\ell}$ across samples. 
Minimizing $\mathbb{E}_z[e_\ell(z)^2]$ therefore directly controls the distortion 
of the influences.

Building on this result, our goal is to find the optimal low-rank approximation $\widehat{W}_\ell$ that minimizes the expected squared effect, $\mathbb{E}_z[e_\ell(z)^2]$.
This objective can be expressed as a weighted Frobenius norm between the original and compressed weights, which under the K-FAC approximation~\citep{martens2015optimizing,grosse2016kronecker} takes the following form:
{
\setlength\abovedisplayshortskip{-12pt}%
\setlength\belowdisplayshortskip{4pt}
\setlength\abovedisplayskip{0pt}%
\setlength\belowdisplayskip{2pt}%
\begin{equation}
\label{eq:obj}
\min_{\widehat{W}_\ell} \mathbb{E}_z\left[e_\ell(z)^2\right] \approx\; \min_{\widehat{W}_\ell} \left\|C_{\delta,\ell}^{1/2} (W_\ell - \widehat{W}_\ell) C_{h,\ell}^{1/2}\right\|_F^2,
\end{equation}
}where $C_{h,\ell} \triangleq \mathbb{E}[h_{\ell-1}h_{\ell-1}^\top]$ and $C_{\delta,\ell} \triangleq \mathbb{E}[\delta_\ell\delta_\ell^\top]$ are the second moment matrices of the inputs and upstream gradients, respectively. 
In effect, these matrices form a reweighting scheme. They rescale $E_{\ell}$ to more heavily penalize errors in directions where inputs are typically large (identified by $C_{h,\ell}$) and where the loss is most sensitive (identified by $C_{\delta,\ell}$). This ensures that our approximation prioritizes preserving the weights most critical to the influences.

This reweighting can be expressed by the data-dependent matrix 
$S_\ell \triangleq C_{\delta,\ell}^{1/2} W_\ell C_{h,\ell}^{1/2}$. 
We then compute the SVD of this matrix, $S_\ell = U_\ell \Sigma_\ell V_\ell^\top$, and truncate it to the top $r_\ell$ singular values to obtain the components $U_{\ell,r}$, $\Sigma_{\ell,r}$, and $V_{\ell,r}$. The optimal low-rank approximation $\widehat{W}_\ell$ is then constructed by transforming these truncated components back to the original weight space:
{
\setlength\abovedisplayshortskip{6pt}%
\setlength\belowdisplayshortskip{6pt}
\setlength\abovedisplayskip{6pt}%
\setlength\belowdisplayskip{1pt}%
\begin{equation*}
    \widehat{W}_\ell = C_{\delta,\ell}^{-1/2} (U_{\ell,r} \Sigma_{\ell,r} V_{\ell,r}^\top) C_{h,\ell}^{-1/2}.
\end{equation*}
}For implementation, this is directly decomposed into the low-rank matrices $\widehat{W}_\ell = A_\ell B_\ell$, where $A_\ell = C_{\delta,\ell}^{-1/2}U_{\ell,r}\Sigma_{\ell,r}^{1/2}$ and $B_\ell = \Sigma_{\ell,r}^{1/2}V_{\ell,r}^\top C_{h,\ell}^{-1/2}$.
To ensure numerical stability, we add a small damping term $\lambda I$ to each second moment matrix. In this low-rank approximation, the weight matrix $W_\ell \in \mathbb{R}^{m_\ell \times n_\ell}$ is approximated with two smaller matrices, $A_\ell \in \mathbb{R}^{m_\ell \times r_\ell}$ and $B_\ell \in \mathbb{R}^{r_\ell \times n_\ell}$, reducing the parameters at layer $\ell$ to $r_\ell (m_\ell + n_\ell)$. The rank $r_\ell$ provides flexible control over the proxy size, enabling a balance between efficiency and approximation quality under a given computational budget.

\paragraph{Efficient and Scalable Implementation.}
Computing the square roots and inverses of the large second moment matrices $C_{h,\ell}$ and $C_{\delta,\ell}$ is prohibitively expensive for large models. To avoid forming these matrices, we approximate the second-moment statistics using a small\textit{probe set} of $N$ samples. A single forward and backward pass collects the inputs and gradients at each layer $\ell$, which are then used to form two matrices:
\begin{equation*}
 H_\ell=[h_{\ell-1}(z_1),\dots,h_{\ell-1}(z_N)]\in\mathbb{R}^{n_\ell\times N} \quad \text{and} \quad \Delta_\ell=[\delta_\ell(z_1),\dots,\delta_\ell(z_N)]\in\mathbb{R}^{m_\ell\times N}.
\end{equation*}
Instead of building the full second moment matrices (e.g., $C_{h,\ell} \approx \frac{1}{N} H_\ell H_\ell^\top$), we compute the "skinny" SVDs of these tall-and-thin probe matrices directly: $H_\ell = U_{H,\ell} \Sigma_{H,\ell} V_{H,\ell}^\top$ and $\Delta_\ell = U_{\Delta,\ell} \Sigma_{\Delta,\ell} V_{\Delta,\ell}^\top$.
This decomposition provides the key to bypassing the expensive computation. The SVD of the large, re-weighted matrix $S_\ell$ can be almost entirely constructed from the SVD of a much smaller \emph{core matrix}, which is built using the components of our skinny SVDs. This reduces the problem to finding the SVD of a matrix whose dimensions are at most $N \times N$, a dramatically smaller task. The complexity is then reduced from $\mathcal{O}(n_\ell^3 + m_\ell^3)$ for full eigen-decompositions to $\mathcal{O}(N^3 + n_\ell N^2 + m_\ell N^2)$, where $N \ll n_\ell, m_\ell$. For a complete derivation, please see Appendix~\ref{app:efficient_implementation}.

\subsection{Stage 2: Approximate Gradient Alignment in the Weight Space}
\label{sec:alignment}
The initial proxy model $f_{\theta'}$ adheres to the theoretical bound established in Proposition~\ref{prop:influence-bound-tracin}. However, as approximation errors compound across layers, its alignment in terms of influence preserving with the original model $f_\theta$ should still be refined. To this end, we employ an aligning stage wherein the proxy is trained to directly mimic the gradient responses signals of the target model.

\paragraph{Aligning Internal Gradient via Low-Rank Projection.}
Our goal is to align the gradients of the initialized proxy with those of the target model. A direct comparison of their gradients, $\nabla_{\theta'} L$ and $\nabla_\theta L$, is ill-posed due to the dimensional mismatch between the models. In practice, this can be addressed by projecting the proxy's gradient into the original model's high-dimensional weight space. For instance, for any layer $\ell$ and a given batch $\mathcal{B}=\{z_i\}_{i=1}^{|\mathcal{B}|}$, one can reconstruct the proxy gradient $\nabla_{W'_\ell} L(\mathcal{B}; \theta')$ and minimize its distance to the target gradient $\nabla_{W_\ell} L(\mathcal{B}; \theta)$. However, this approach has a critical drawback. Once we align the gradients of $W_\ell$ and $W'_\ell$ in the full parameter space, any subsequent influence calculation would also require reconstructing the proxy’s gradient in this high-dimensional form. Performing this reconstruction for each sample introduces substantial computational and memory overhead, which undermines the efficiency benefits of a low-rank proxy.

To ensure the proxy remains efficient for downstream tasks, we adopt a more practical strategy: we project the original model's gradient \textit{down} into the low-rank proxy space and perform the alignment there. Since the proxy layer is defined by low-rank matrices $A_\ell$ and $B_\ell$ (where $W_\ell \approx A_\ell B_\ell$), its true gradients are with respect to these matrices, $\nabla_{A_\ell}L$ and $\nabla_{B_\ell}L$. Using the chain rule, we can project the full gradient $\nabla_{W_\ell}L$ onto $A_\ell$ and $B_\ell$, where $\nabla_{A_\ell}L=\frac{\partial L}{\partial W_\ell}\frac{\partial W_\ell}{\partial A_\ell}=\nabla_{W_\ell}LB_\ell^\top$ and $\nabla_{B_\ell}L=\frac{\partial L}{\partial W_\ell}\frac{\partial W_\ell}{\partial B_\ell}=A_\ell^\top\nabla_{W_\ell}L$. This yields a loss based on the following alignment objectives:
{
\setlength\abovedisplayshortskip{3pt}%
\setlength\belowdisplayshortskip{6pt}
\setlength\abovedisplayskip{3pt}%
\setlength\belowdisplayskip{1pt}%
\begin{equation}
\label{eq:GA}
L_{\mathrm{GA}}(\mathcal{B};\theta') = \frac{1}{|\mathcal{L}|} \sum_{\ell \in \mathcal{L}} \left(
    d(\nabla_{A_\ell}L, \text{sg}(\nabla_{W_\ell}L) B_\ell^\top)
    + d(\nabla_{B_\ell}L, A_\ell^\top \text{sg}(\nabla_{W_\ell}L))
\right),
\end{equation}
}
where $d(\cdot,\cdot)$ is a distance function and $\mathcal{L}$ denotes all decomposed layers in the proxy model.  
Here $\text{sg}(\nabla_{W_\ell}L)$ indicates stop gradient.
This objective aligns the gradients entirely within the parameter space of the proxy, eliminating any need for high-dimensional reconstruction during influence calculation and thus preserving its efficiency.

\paragraph{Anchoring External Output Behavior.}
To stabilize gradient alignment and prevent the proxy from collapsing, we anchor its output distribution to that of the teacher model, inspired by the idea of knowledge distillation. This provides a stable basis for alignment, where we employ the standard forward Kullback–Leibler (KL) divergence loss:
{
\setlength\abovedisplayshortskip{3pt}%
\setlength\belowdisplayshortskip{6pt}
\setlength\abovedisplayskip{3pt}%
\setlength\belowdisplayskip{1pt}%
\begin{equation}
\label{eq:KD}
L_{\mathrm{KL}}(\mathcal{B};\theta')
\;=\;
\tau^2\;\frac{1}{|\mathcal{B}|}\sum_{z\in\mathcal{B}}\,
\mathrm{KL}\!\left( \mathrm{softmax}\big(f_{\theta}(z)/\tau\big) \,\big\|\, \mathrm{softmax}\big(f_{\theta'}(z)/\tau\big) \right),
\end{equation}
}
where $\tau$ is the distillation temperature and $f_\theta,f_{\theta'}$ are output logits.
Our final objective for the initialized proxy combines the gradient alignment and output anchoring losses:
{
\setlength\abovedisplayshortskip{3pt}%
\setlength\belowdisplayshortskip{6pt}
\setlength\abovedisplayskip{3pt}%
\setlength\belowdisplayskip{1pt}%
\begin{equation}
\label{eq:final-obj}
\min_{\theta'} \;
L_{\mathrm{GA}}(\mathcal{B};\theta')
\;+\;
\lambda_{\mathrm{KL}}\;L_{\mathrm{KL}}(\mathcal{B};\theta'),
\end{equation}
}where $\lambda_{\mathrm{KL}}$ controls the strength of the anchoring term. 
\paragraph{Discussion.}
\label{sec:discussion}
\method shows that low-rank proxies can preserve gradient-based influences, but there are trade-offs to consider. The embedding layer and LM head are essential for model performance and are less suitable for compression~\citep{namburi2023cost,dettmers2022gpt3}, which places a natural limit on parameter reduction. Moreover, prior work finds that model quality drops sharply once the rank falls below about 10\% of the original size~\citep{wangsvd,hsu2022language}, meaning proxies cannot be reduced arbitrarily without sacrificing performance or their ability to preserve influence. Even with our aligning stage, fully recovering gradient behavior under such aggressive compression remains difficult. These limitations do not diminish the usefulness of our method but highlight the inherent trade-offs between efficiency and proxy quality. Further discussion is provided in Appendix~\ref{sec:further_discussion}.

\vspace{-3pt}
\section{Experiments}
\label{sec:exp}
\vspace{-3pt}
In this section, we provide a comprehensive evaluation of \method. 
We first describe the experimental setup (\S\ref{sec:exp_setup}), then present results comparing \method with off-the-shelf proxies and baselines (\S\ref{sec:main_results}). We follow with analysis (\S\ref{sec:analysis}), covering different influence estimators, efficiency, factors behind its effectiveness, probe set quality, and ablations. Additional results under varying data budgets, the effect of IPSVD, with diverse candidate training datasets, and sensitivity studies on extreme sparsity and the KL coefficient are provided in Appendix~\ref{sec:additional_exp_results}.

\vspace{-3pt}
\subsection{Experimental Setup}
\label{sec:exp_setup}
\paragraph{Datasets and Models.}

We use the \textsc{Dolly} dataset~\citep{conover2023free} as our candidate training data $\mathcal{D}_\text{train}$ following~\citep{wang2023far}.
It provides a diverse collection of instruction-response pairs designed for aligning large language models with human preferences. We further evaluate alternative candidate training datasets spanning diverse domains in Appendix~\ref{sec:additional_exp_results}. We evaluate models ranging from 3B to 7B parameters across four different model families: Llama3.2-3B~\citep{dubey2024llama}, Gemma3-4B~\citep{team2025gemma}, Qwen3-4B~\citep{yang2025qwen3}, and Qwen2-7B~\citep{team2024qwen2}.
\vspace{-20pt}

\begin{wraptable}{r}{0.5\textwidth}
    \centering
    \setlength{\abovecaptionskip}{1pt}
    \vspace{-10pt}
    \caption{Statistics of the evaluation datasets for fine-tuning.}
    \label{tab:task_statistics}
    
    \resizebox{\linewidth}{!}{%
    \begin{tabular}{l l c c c c}
        \toprule
        Dataset & Task & $\mathcal{D}_{\text{test}}$ & $\mathcal{D}_{\text{val}}$ & \# Shots & Metric \\
        \midrule
        TyDiQA & Multilingual QA & 1,713 & 9   & 1 & Exact Match \\
        MMLU & Multiple choice & 18,721 & 285 & 5 & Accuracy \\
        BBH & Reasoning & 920 & 81  & 3 & Accuracy \\
        \bottomrule
    \end{tabular}%
    }
    \vspace{-10pt}
\end{wraptable}
\paragraph{Baselines and Evaluation.}
To our knowledge, this direction is underexplored, so we mainly compare with off-the-shelf proxies within the same model family.
In addition, we propose two baselines based on related work: 
\textbf{Layer Extraction}, which selects layers from the original model using heuristics~\citep{men2024shortgptlayerslargelanguage}, 
and \textbf{Influence Scorer}, which trains a smaller model to predict influence scores for the dataset~\citep{yu2024mates}. 
Following~\citep{xia2024less}, we use MMLU~\citep{hendrycks2020measuring}, BBH~\citep{suzgun2022challenging}, 
and TyDiQA~\citep{clark2020tydi} to evaluate the final performance. Table~\ref{tab:task_statistics} shows some statistics.
Appendix~\ref{sec:further_details_baseline} contains more details.

\vspace{-8pt}
\paragraph{Data Selection Settings.}
We implement TracIn-based influence estimation following~\cite{xia2024less}, adopting the SGD influence variant and omitting the gradient projection component for simplicity. For influence function estimation, we implement it based on the K-FAC method ~\citep{grosse2016kronecker}. 
The target models are first warmed up on a randomly selected 5\% subset of $\mathcal{D}_\text{train}$ for subsequent data selection. Data are then scored according to the computed influence values, and the top 5\% are selected. Each model is full fine-tuned on the selected data for 4 epochs. As discussed in Section~\ref{sec:discussion}, we freeze the embedding and LM head during warm-up to prevent performance degradation and exclude them from influence calculation. Appendix~\ref{sec:further_details_settings} contains more details.

\vspace{-8pt}
\paragraph{Implementation Details.}
\method is built from the warmed-up target model. 
We implement it using 1\% of the data source, of which 10\% is allocated as probe set, and 90\% as aligning data.
We vary the sparsity level $\rho$, the proportion of parameters removed by compression, to examine the trade-off between efficiency and performance. 
Appendix~\ref{sec:further_details_implementation} contains more details.


\begin{table}[!t]
    \centering
    \small
    \caption{\method compared with off-the-shelf proxies across four target model families. For each target model, we report results using the full model (shown in gray, provided only as a reference), an off-the-shelf proxy from the same family, and \method with different sparsity levels $\rho$. \textbf{Bold} and \underline{underline} indicate the best and second-best proxy results, respectively.}
    
    \label{tab:main_results}
    \begin{tabular}{l|l|l|ccc|c}
        \toprule
    
        \textbf{Target Model} & \textbf{Proxy Model} & \textbf{\#Params} & \textbf{MMLU} & \textbf{BBH} & \textbf{TyDiQA} & \textbf{Avg.} \\
        \midrule
        \multirow{6}{*}{Llama3.2-3B} 
        & \gray{Llama3.2-3B} & \gray{3B} & \gray{56.28} & \gray{47.78} & \gray{43.10} & \gray{49.05} \\
        & Llama3.2-1B & 1B & 55.89 & 47.31 & 38.84 & 47.35 \\
        & \method, $\rho=0.3$ & 2.5B & \textbf{56.77} & \textbf{49.16} & \textbf{40.98} & \textbf{48.97} \\
        & \method, $\rho=0.5$ & 1.8B & \underline{56.35} & \underline{47.69} & \underline{39.77} & \underline{47.94} \\
        & \method, $\rho=0.7$ & 1.3B & 56.28 & 47.31 & 39.04 & 47.54 \\
        \midrule
        \multirow{6}{*}{Gemma3-4B} 
        & \gray{Gemma3-4B} & \gray{4B} & \gray{59.67} & \gray{47.68} & \gray{28.14} & \gray{45.16} \\
        & Gemma3-1B & 1B & \textbf{59.61} & 47.31 & 25.43 & 44.12 \\
        & \method, $\rho=0.3$ & 3B & 59.36 & \textbf{49.63} & \textbf{32.19} & \textbf{47.06} \\
        & \method, $\rho=0.5$ & 2.3B & \underline{59.47} & \underline{48.70} & \underline{31.42} & \underline{46.53} \\
        & \method, $\rho=0.7$ & 1.6B & 59.32 & 48.52 & 29.12 & 45.65 \\
        \midrule
        \multirow{6}{*}{Qwen3-4B} 
        & \gray{Qwen3-4B} & \gray{4B} & \gray{69.90} & \gray{74.62} & \gray{49.56} & \gray{64.69} \\
        & Qwen3-1.7B & 1.7B & 69.65 & 74.44 & 47.35 & 63.81 \\
        & \method, $\rho=0.3$ & 3.1B & \textbf{70.15} & \textbf{75.18} & \textbf{50.63} & \textbf{65.32} \\
        & \method, $\rho=0.5$ & 2.2B & \underline{70.08} & \underline{74.72} & \underline{48.45} & \underline{64.42} \\
        & \method, $\rho=0.7$ & 1.5B & 69.94 & 74.62 & 47.98 & 64.18 \\
        \midrule
        \multirow{6}{*}{Qwen2-7B} 
        & \gray{Qwen2-7B} & \gray{7B} & \gray{70.35} & \gray{61.85} & \gray{51.46} & \gray{61.22} \\
        & Qwen2-1.5B & 1.5B & 70.18 & 59.72 & 47.29 & 59.06 \\
        & \method, $\rho=0.3$ & 5.8B & \underline{70.36} & \textbf{60.93} & \textbf{53.56} & \textbf{61.62} \\
        & \method, $\rho=0.5$ & 4.4B & 70.27 & \underline{60.74} & \underline{51.36} & \underline{60.79} \\
        & \method, $\rho=0.7$ & 3.3B & \textbf{70.41} & 60.28 & 50.61 & 60.43 \\
     
        \bottomrule
    \end{tabular}
    \vspace{-14pt}
\end{table}
\subsection{Main Results}
\label{sec:main_results}
We first compare \method with off-the-shelf proxies, with the results summarized in Table~\ref{tab:main_results}. We vary $\rho$ so that proxy sizes range from off-the-shelf scale to near the target model. 
The key findings are:

\textbf{\method is effective across different models.} 
\method consistently outperforms the off-the-shelf proxies across all sparsity levels on BBH and TyDiQA, while also achieving competitive results on MMLU, demonstrating the effectiveness of our approach. 
Notably, on Qwen3, \method even surpasses the larger 1.7B off-the-shelf proxy with a proxy of only 1.5B parameters. 

\textbf{Larger proxies yield better performance.}  
Across all four model families, we observe a clear trend: increasing proxy size leads to improved performance. This highlights that our approach enables a controllable trade-off between computational cost and downstream performance.  

\textbf{Task type matters.}  
We find that the benefits of \method vary across tasks. The performance gains are more pronounced on TyDiQA than on MMLU. We argue that this difference may stem from the nature of the tasks, since TyDiQA and Dolly are both closer to open-domain QA settings, whereas MMLU emphasizes complex reasoning tasks where data selected from Dolly provides only limited improvements. This observation aligns with Eq.~\ref{eq:influence-bound}, which indicates that greater distributional shift between training and validation sets results in a looser error bound.

\textbf{Proxies can even outperform target models.}  
In some cases, \method surpasses the performance obtained with data selected by the target model itself, such as Qwen3-4B with $\rho=0.3$ on BBH and Qwen2-7B with $\rho=0.3$ on TyDiQA. This phenomenon, where smaller models identify more generalizable training data, has also been reported in prior work across pre-training~\citep{xie2023doremi,engstromdsdm}, fine-tuning~\citep{xia2024less}, and in-context learning~\citep{wang2023large}. Our experiments reinforce this observation, showing that sometimes proxies can select data for the target model more effectively than the target model itself.

\vspace{-5pt}
\begin{table}[ht!]
    \centering
    \scriptsize
    \setlength{\tabcolsep}{4pt}
    \renewcommand{\arraystretch}{0.95}
    \caption{Comparison of \method with two baselines: \textbf{Layer Extraction} and \textbf{Influence Scorer}. For \method and Layer Extraction, we report the results based on $\rho=0.3$. $\Delta$ denotes the performance gain of \method over the strongest baseline. \textbf{Bold} indicates the best results.}
    \label{tab:compare_with_baseline}
    \resizebox{\linewidth}{!}{
    \begin{tabular}{l|cccc|cccc}
        \toprule
        & \multicolumn{4}{c|}{\textbf{Llama3.2-3B}} 
        & \multicolumn{4}{c}{\textbf{Gemma3-4B}} \\
        \cmidrule(lr){2-5}\cmidrule(lr){6-9}
        \textbf{Task} & Layer Extraction & Influence Scorer & \method & $\Delta$
             & Layer Extraction & Influence Scorer & \method & $\Delta$ \\
        \midrule
        MMLU   & 56.44 & 56.42 & \textbf{56.77} & 0.33
               & 59.30 & \textbf{59.49} & 59.36 & -0.13 \\
        BBH    & 46.85 & 46.57 & \textbf{49.16} & 2.31
               & 48.79 & 47.87 & \textbf{49.63} & 0.84 \\
        TyDiQA & 35.18 & 34.11 & \textbf{40.98} & 5.80
               & 26.91 & 26.91 & \textbf{32.19} & 5.28 \\
        Avg.   & 46.16 & 45.70 & \textbf{48.97} & 2.81
               & 45.00 & 44.76 & \textbf{47.06} & 1.99 \\
        \bottomrule
    \end{tabular}}
\end{table}
Next, we compare \method with two baselines. 
As shown in Table~\ref{tab:compare_with_baseline}, \method achieves overall stronger performance than both baselines, with an average improvement of 2.81\% on Llama3.2-3B and 1.99\% on Gemma3-4B. We observe that while the two baselines obtain comparable or slightly higher results on MMLU, these improvements are less conclusive, since both methods perform notably worse than the off-the-shelf proxy on BBH and TyDiQA. 
We also acknowledge that both baselines are computationally cheaper, but they do not preserve gradient information and therefore struggle to identify useful data. Additional results on other model families are provided in Appendix~\ref{sec:additional_exp_results}.

\subsection{Analysis}
\label{sec:analysis}
\begin{table}[ht!]
    \centering
    \scriptsize
    \setlength{\tabcolsep}{5pt}
    \renewcommand{\arraystretch}{1.0}
    \caption{Evaluation results of \method on Influence Function. \textbf{Bold} and \underline{underline} indicate the best and second-best results within each target group, respectively.}
    \label{tab:influenc_function}
    \resizebox{\linewidth}{!}{
    \begin{tabular}{l|ccccc|ccccc}
        \toprule
        & \multicolumn{5}{c|}{\textbf{Llama3.2-3B}} & \multicolumn{5}{c}{\textbf{Gemma3-4B}} \\
        \cmidrule(lr){2-6} \cmidrule(lr){7-11}
        \textbf{Task} &  &  & \multicolumn{3}{c|}{\method} 
             &  &  & \multicolumn{3}{c}{\method} \\
        \cmidrule(lr){4-6} \cmidrule(lr){9-11}
             & \gray{Llama3.2-3B} & Llama3.2-1B & $\rho\!=\!0.3$ & $\rho=0.5$ & $\rho=0.7$ 
             & \gray{Gemma3-4B} & Gemma3-1B & $\rho=0.3$ & $\rho=0.5$ & $\rho=0.7$ \\
        \midrule
        MMLU   & \gray{56.31} & \underline{56.10} & 56.09 & 55.96 & \textbf{56.52} 
               & \gray{59.50} & 59.18 & \underline{59.37} & \textbf{59.57} & 59.34 \\
        BBH    & \gray{48.43} & 46.20 & \textbf{48.24} & \underline{47.96} & 47.31 
               & \gray{49.54} & 45.09 & \textbf{48.98} & \underline{48.52} & 48.15 \\
        TyDiQA & \gray{41.88} & 38.13 & \textbf{44.35} & \underline{41.57} & 39.05 
               & \gray{32.48} & 30.01 & \textbf{34.18} & \underline{33.94} & 28.44 \\
        Avg.   & \gray{48.87} & 46.81 & \textbf{49.56} & \underline{48.50} & 47.63 
               & \gray{47.17} & 44.76 & \textbf{47.51} & \underline{47.34} & 45.31 \\
        \bottomrule
    \end{tabular}}
    \vspace{-15pt}
\end{table}
\begin{wrapfigure}{r}{0.34\textwidth}
    \centering
    \vspace{-16pt}
    \includegraphics[width=\linewidth]{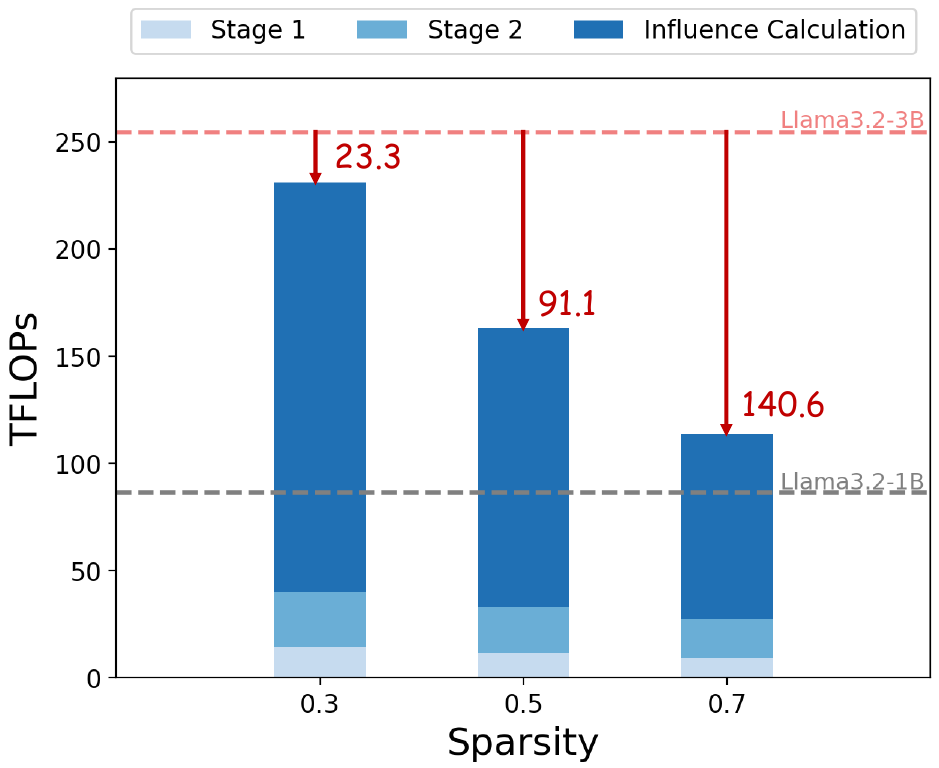}
    \caption{TFLOPs breakdown on Llama3.2-3B across different sparsity levels.}
    \label{fig:tflops}
    \vspace{-28pt}
\end{wrapfigure}
\paragraph{Results on Influence Function.} To validate the effectiveness of \method across different gradient-based influence, we also evaluate \method under the Influence Function. The results are reported in Table~\ref{tab:influenc_function}. We find that \method outperforms off-the-shelf proxies on BBH and TyDiQA while remaining competitive on MMLU. Averaged across tasks, \method achieves clear gains over the smaller proxies on both Llama3.2-3B and Gemma3-4B, leading to a conclusion consistent with Table~\ref{tab:main_results}. These results suggest that the improvements brought by \method are consistent across different gradient-based influences.

\begin{wraptable}{r}{0.5\textwidth}
    \centering
    \vspace{-11pt}
    \small
    \setlength{\abovecaptionskip}{0pt}
    \caption{Computation breakdown on Llama3.2-3B measured in single GH200 GPU hours. Infl.~Calc. denotes the time for influence calculation.}
    \label{tab:stage_breakdown}
    \setlength{\tabcolsep}{3pt} 
    \begin{tabular}{lccc}
        \toprule
        \textbf{Model} & \textbf{Stage 1} & \textbf{Stage 2} & \textbf{Infl. Calc.} \\   
        \midrule
        Llama3.2-3B & -- & -- & $\sim$90 mins \\
        Llama3.2-1B & -- & -- & $\sim$40 mins \\
        \method  & $\sim$2 mins & $\sim$3--5 mins & $\sim$38--44 mins \\
        \bottomrule
    \end{tabular}
    \vspace{-8pt}
\end{wraptable}
\paragraph{Efficiency Analysis.}
We further analyze the efficiency of \method by reporting both theoretical FLOPs and actual GPU hours. 
Figure~\ref{fig:tflops} shows the FLOPs breakdown on Llama3.2-3B across different sparsity levels. As sparsity increases, the total FLOPs drop substantially, leading to over $140$ TFLOPs savings at $\rho\!=\!0.7$ compared to the full 3B model. Moreover, Stage~1 and Stage~2 account for only a small portion of the total FLOPs, and their cost further decreases as sparsity grows. 

Table~\ref{tab:stage_breakdown} reports the estimated wall-clock computation measured on a single GH200 GPU, with \method\ ranging across all sparsity levels from 0.3 to 0.7. Compared to $\sim$90 minutes required for influence calculation with the 3B model and $\sim$40 minutes with the 1B off-the-shelf proxy, our method performs influence calculation in only $\sim$38--44 minutes. Proxy construction (Stage~1 and Stage~2) adds less than 10 minutes of extra cost, bringing the total runtime to about 43--51 minutes. Thus, the efficiency of \method mainly comes from the reduced cost of influence calculation, with proxy construction contributing only a small computational overhead.
Together, these results highlight that \method achieves notable efficiency improvements while maintaining strong performance, making it a practical alternative to direct influence calculation with target models.
\vspace{-1pt}
\begin{wraptable}{r}{0.50\textwidth}
\vspace{-14pt}
\setlength{\tabcolsep}{2pt}{
    \centering
    \scriptsize
    \setlength{\abovecaptionskip}{3pt}
    \caption{
    Similarity and diversity of selected subsets with the target model Llama3.2-3B. 
    \textbf{SA} measures subspace alignment with the target task (higher is better), and \textbf{1-NND} measures average nearest-neighbor distance within the selected dataset for diversity (higher is better). \textbf{Bold} and \underline{underline} indicate the best and second-best proxy results.
    }
    \label{tab:similarity_diversity}
    \begin{tabular}{lcccccc}
        \toprule
        \textbf{Proxy Model} & \multicolumn{2}{c}{\textbf{MMLU}} & \multicolumn{2}{c}{\textbf{BBH}} & \multicolumn{2}{c}{\textbf{TyDiQA}} \\
        \cmidrule(lr){2-3}\cmidrule(lr){4-5}\cmidrule(lr){6-7}
         & SA$\uparrow$ & 1-NND $\uparrow$ & SA$\uparrow$ & 1-NND$\uparrow$ & SA$\uparrow$ & 1-NND$\uparrow$ \\
        \midrule
        Llama3.2-1B  & 29.01 & 13.91 & 20.94 & 13.29 & 18.61 & 13.13 \\
        \method, $\rho=0.3$  & \textbf{33.39} & 14.04 & \textbf{21.78} & \textbf{15.67} & \textbf{24.59} & 15.26 \\
        \method, $\rho=0.5$ & \underline{33.14} & \underline{14.31} & \underline{21.32} & 15.45 & \underline{20.59} & \textbf{16.17} \\
        \method, $\rho=0.7$ & 32.19 & \textbf{16.07} & 21.32 & \underline{15.63} & 19.72 & \underline{15.82} \\
        \bottomrule
    \end{tabular}
    \vspace{-14pt}
    }
\end{wraptable}
\paragraph{Behind \method Effectiveness.}
To understand why \method is effective in data selection, we first examine the similarity between the selected data and the target task using subspace affinity (SA)~\citep{soltanolkotabi2014robust}. 
As shown in Table~\ref{tab:similarity_diversity}, proxies with lower sparsity (e.g., $\rho=0.3$) achieve higher SA than the off-the-shelf 1B proxy, most notably on TyDiQA, suggesting that they capture gradient directions more consistent with the target task.

Beyond similarity, diversity also plays a key role in boosting downstream performance~\citep{zhang2024harnessing}.
Therefore, we measure the average nearest-neighbor distance (1-NND) within selected subsets as a measurement for diversity and find that proxies with higher sparsity (e.g., $\rho=0.7$) yield larger 1-NND values than the 1B proxy. 
This suggests that even when compressed, \method preserve a sufficient degree of diversity in their selections. 
We argue that \method steers selection toward task-relevant directions while its sparsity allows variation in less dominant components, which helps maintain diversity in the selected data.

\begin{wraptable}{r}{0.50\textwidth}
    \centering
    \setlength{\tabcolsep}{5pt}
    \renewcommand{\arraystretch}{1.05}
    \vspace{-13pt}
    \caption{Impact of Probe Set Size. Increasing probe size yields diminishing returns while significantly increasing computational overhead.}
    \label{tab:probe_size}
    \small
    \begin{tabular}{l c c c c}
        \toprule
        \textbf{Probe Size} & \textbf{MMLU} & \textbf{BBH} & \textbf{TyDiQA} & \textbf{Avg.} \\
        \midrule
        $0.5\times$ & 56.12 & 46.85 & 37.71 & 46.89 \\
                             Default & 56.28 & 47.31 & 39.04 & 47.54 \\
                             $3\times$   & 56.26 & 47.41 & 39.89 & 47.85 \\
                             $5\times$   & 56.41 & 47.50 & 38.76 & 47.55 \\
        \bottomrule
    \end{tabular}
    \vspace{-13pt}
\end{wraptable}
\paragraph{Probe Set Quality.}
We investigate the impact of the probe set configuration on the performance of IProX, specifically focusing on the trade-offs regarding probe set size and data diversity. All experiments in this section are conducted on Llama3.2-3B with a sparsity ratio of $\rho = 0.7$.

We first analyze the sensitivity of proxy performance to the size of the probe set $N$. As shown in Table~\ref{tab:probe_size}, there is a clear trade-off between marginal performance gains and computational efficiency. While increasing the probe size to $3\times$ the default value yields performance improvements, these gains saturate around $3\times$. Notably, further increasing the size to $5\times$ results in diminishing returns. From an efficiency perspective, IPSVD benefits from a small $N$ to enable fast ``skinny SVDs'' (Appendix~\ref{app:efficient_implementation}), and the Stage 1 cost scales roughly linearly with $N$. Overall, the saturation in performance gains and the increasing Stage 1 cost justify our choice of a moderate default probe size.


\begin{wraptable}{r}{0.55\textwidth}
    \centering
    \setlength{\tabcolsep}{5pt}
    \renewcommand{\arraystretch}{1.05}
    \vspace{-13pt}
    \small
    \caption{Impact of Probe Set Diversity. Reducing diversity (via redundancy injection) while keeping size fixed leads to performance degradation.}
    \label{tab:probe_diversity}
    \begin{tabular}{l c c c c}
        \toprule
        \textbf{Diversity Setting} & \textbf{MMLU} & \textbf{BBH} & \textbf{TyDiQA} & \textbf{Avg.} \\
        \midrule
        Default (Random) & 56.28 & 47.31 & 39.04 & 47.54 \\
        10\% redundancy  & 56.28 & 47.22 & 38.76 & 47.42 \\
        20\% redundancy  & 56.15 & 46.76 & 38.40 & 47.10 \\
        30\% redundancy  & 56.12 & 45.65 & 37.67 & 46.48 \\
        \bottomrule
    \end{tabular}
    \vspace{-13pt}
\end{wraptable}

To assess the role of probe-set diversity, we simulated low-diversity scenarios by replacing 10\%--30\% of the probe set samples with SMOTE-based interpolation, while strictly keeping the total size $N$ fixed. 
The results in Table~\ref{tab:probe_diversity} demonstrate that performance degrades consistently as diversity decreases. This confirms that IProX benefits significantly from the high diversity naturally provided by our random data and uniform token sampling strategy. 

\begin{wraptable}{r}{0.40\textwidth}
    \centering
    \setlength{\tabcolsep}{5pt}
    \renewcommand{\arraystretch}{1.05}
    \vspace{-13pt}
    \setlength{\abovecaptionskip}{3pt}
    \caption{Ablation study on Llama3.2-3B. Removing KL anchoring or the entire aligning stage leads to consistent drops in performance across all tasks.}
    \label{tab:ablation}
    \scriptsize
    \begin{tabular}{l|c|c|c}
        \toprule
        \textbf{Model} & \textbf{MMLU} & \textbf{BBH} & \textbf{TyDiQA} \\
        \midrule
        \method, $\rho=0.3$   & 56.77 & 49.16 & 40.98 \\
        \quad w/o KL anchoring    & 56.52 & 48.88 & 40.85 \\ 
        \quad\quad w/o alignment  & 56.41 & 48.51 & 39.33 \\ 
        \midrule
        \method, $\rho=0.5$   & 56.35 & 47.69 & 39.77 \\
        \quad w/o KL anchoring    & 56.19 & 47.59 & 39.04 \\ 
        \quad\quad w/o alignment  & 56.08 & 47.03 & 36.43 \\ 
        \midrule
        \method, $\rho=0.7$   & 56.24 & 47.31 & 39.79 \\
        \quad w/o KL anchoring    & 56.04 & 46.85 & 37.66 \\ 
        \quad\quad w/o alignment  & 55.99 & 46.48 & 35.32 \\ 
        \bottomrule
    \end{tabular}
    \vspace{-15pt}
\end{wraptable}
\paragraph{Ablation Studies.}  
Table~\ref{tab:ablation} presents an ablation study on different components.  
We observe that removing KL anchoring consistently reduces performance across all three benchmarks, while removing the entire aligning stage leads to even larger drops, particularly on TyDiQA.  
The degradation is more pronounced at higher sparsity levels, suggesting that alignment becomes increasingly important as the proxy is more aggressively compressed.  
Overall, the results show that KL anchoring and gradient alignment are complementary. KL anchoring stabilizes training by constraining outputs, while gradient alignment preserves influence-relevant directions, and together they maintain the quality of selected data.
\section{Conclusion}
We introduced \method, a principled framework for constructing influence-preserving proxies for efficient data selection in LLM fine-tuning. By compressing the target model with an influence-preserving low-rank approximation and refining it through model gradient and output alignment, \method preserves the influence information of the target model while reducing computational cost. Experiments across multiple model families and tasks show consistent gains over off-the-shelf proxies and baselines, together with clear efficiency benefits. These results suggest that influence-preserving proxies offer a scalable approach to gradient-based data selection in LLM fine-tuning.

\newpage
\bibliography{iclr2026_conference}
\bibliographystyle{iclr2026_conference}
\clearpage
\appendix
\part*{Appendix}
\addcontentsline{toc}{part}{Appendix}

\renewcommand{\contentsname}{Appendix Contents}
\etocsetnexttocdepth{subsection}
{\hypersetup{linkcolor=[HTML]{800000}}
\localtableofcontents
}
\clearpage
\section{Further Details on Experiment Setup}
\subsection{Baseline and Evaluation Details}
\label{sec:further_details_baseline}
Here, we provide additional implementation details of the baseline and detailed evaluation settings.  

\begin{itemize}
    \item \textbf{Layer Extraction:} We extract layers from the warmed-up target models. Following~\cite{men2024shortgptlayerslargelanguage}, each block (i.e., attention + MLP) is scored with an influence defined as:  
    \begin{equation*}
        I_\text{LE}=1-\mathbb{E}_x \frac{h_i^\top h_{i+1}}{\|h_i\|\|h_{i+1}\|},
    \end{equation*}
    where $h_i$ and $h_{i+1}$ denote the hidden states before and after the $i$-th block, respectively. This score captures how much the representation changes across the block, with larger values indicating greater influence. For a fair comparison, we set the sparsity to $\rho=0.3$ and select the top 70\% of blocks ranked by their influence scores $I_\text{LE}$. The influence is computed using a 1\% random sample from $\mathcal{D}_{\text{train}}$, and the final results are reported in Table~\ref{tab:compare_with_baseline}.

    \item \textbf{Influence Scorer:} Prior work~\citep{gu2024data,yu2024mates} formulates this task as a regression problem, where a smaller model is trained to predict influence scores from a limited set of annotated data. Concretely, the target model is first used to compute influence on a hold-out set, and these values are then used to supervise the smaller model. Once trained, the smaller model is applied to generate influence scores for the entire dataset. This approach raises two concerns. First, it still requires influence computation with the original model to produce annotations. Second, the generalizability of the smaller model is uncertain, as data preferences may shift during training, necessitating repeated re-annotation and retraining for accuracy. In our implementation, we adopt the off-the-shelf model from Table~\ref{tab:main_results} as the backbone, attach a regression head, and freeze all other layers during training. For a fair comparison, we use 1\% of $\mathcal{D}_{\text{train}}$ as the hold-out set and perform training only once. The default learning rate is set to $1\mathrm{e}{-5}$, and we optimize using Adam with a weight decay of $1\mathrm{e}{-2}$ for 5 epochs.

\end{itemize}

We follow~\cite{xia2024less} to evaluate the performance of the models on the target tasks.
For MMLU, we evaluate 5-shot accuracy on the test set averaged across 57 subtasks. For TyDiQA, we report 1-shot macro-averaged exact match across 9 languages under the gold-passage setting, where a passage containing the reference answer is provided. For BBH, we measure the average 3-shot exact match across all tasks. All models are trained for 4 epochs with a default learning rate of $1\mathrm{e}{-5}$, and we report the final performance.

\subsection{Data Selection Setting Details}
\label{sec:further_details_settings}
For TracIn influence, the implementation follows~\cite{xia2024less} with two key modifications. As the experiments are conducted with full fine-tuning rather than parameter-efficient fine-tuning (PEFT), the most time-consuming gradient projection step is omitted. The averaged gradient on the validation set is computed and its cosine similarity with the gradient of each training sample is used as the final influence score, rather than Adam moments. In addition, due to computational budget constraints, we warm-up for only one epoch with a default learning rate of $1e-5$ and a weight decay of $1e-2$.

For Influence Functions, K-FAC~\citep{grosse2016kronecker} is used to compute the inverse Hessian–vector product for each layer, and the resulting vectors are concatenated to form the final representation. For computational efficiency, Hessian statistics are estimated using 1024 samples rather than the entire dataset.
\subsection{Implementation Details}
\label{sec:further_details_implementation}
We initialize \method using 1\% of randomly sampled data from $\mathcal{D}_{\text{train}}$, allocating 10\% to the first stage and 90\% to the second stage.  

In the first stage, the number of collected second-moment matrices $N$ ranges from 512 to 2048, depending on the model size. Rather than averaging over entire sequences to collect $h$s and $\delta$s, we sample tokens within each sequence, with the sampling budget precomputed to ensure uniform coverage over the entire probe set. This design offers two advantages: (i) random or stratified token sampling better captures local geometry across different positions, difficulty levels, and attention patterns; and (ii) it mitigates length bias. Since sequence lengths vary widely, per-sequence averaging tends to compress the internal diversity of long sequences while disproportionately amplifying or diminishing short sequences.  
For numerical stability, we add a damping term of $10^{-3}$ when computing the SVD. To improve hardware efficiency, the rank of each layer is further aligned to a multiple of 128, which facilitates optimal tensor core utilization during GPU computation.  

For the second stage, we perform a grid search over the following hyperparameters: learning rates of $1\mathrm{e}{-5}$, $5\mathrm{e}{-5}$, and $1\mathrm{e}{-4}$, and $\lambda_{\text{KL}}$ values of $0$, $0.1$, $0.01$, and $0.001$. We use a weight decay of $0.01$, align only the decomposed layers while keeping all others (including biases) fixed, and set the batch size to 4. We use $1-\cos(\cdot,\cdot)$ as the distance metric in Eq~\ref{eq:GA}.

All experiments are conducted on compute nodes with ARM architecture and equipped with NVIDIA GH200.
\clearpage
\section{Further Discussion}
\label{sec:further_discussion}

Data-centric AI has received increasing attention in recent years~\citep{xu2024position,kangget,kang2024autoscale,zhang2025survey}, with data selection emerging as a central paradigm spanning many areas~\citep{ai2025oasis,zou2025autotool,qi2026autoregressive,sun2025improving,zou2025latent}. Among these approaches, gradient-based methods have become one of the dominant families due to their direct connection to training dynamics and downstream performance. Building on the discussion in the main text (e.g., \S\ref{sec:discussion}), we further clarify how \method relates to prior work on efficient gradient-based data selection~\citep{kwondatainf, yu2024mates, xia2024less, linefficient, yu2025group}. 

At a high level, these lines of work share the same core signal. They treat gradient geometry, such as gradient norms and gradient similarities, as a surrogate for example utility. However, they optimize different parts of the pipeline. Most efficient gradient based selection methods accelerate the selection procedure given access to gradients. Typical strategies reduce the number of scoring steps or checkpoints, compute lower-dimensional or partial gradients, amortize scoring across batches, or use algorithmic approximations to avoid expensive influence computations. Importantly, these approaches still rely on running the original target model to generate the underlying gradients, and mainly reduce the overhead of how gradients are computed, represented, or aggregated for scoring. In contrast, \method targets a complementary bottleneck by reducing the cost of producing gradients. It replaces the full target model with an influence preserving proxy whose forward and backward passes are cheaper, while maintaining the influence relevant gradient signal that downstream selectors depend on. This makes \method orthogonal to algorithmic speedups in the selection rule and naturally composable with them. In practice, one can run a lightweight selector on top of a \method proxy and obtain multiplicative gains by reducing both the per gradient cost and the number or dimensionality of gradient evaluations required by the selector.

Despite these benefits, \method also has limitations that mirror the assumptions behind influence-based approximations. Aggressive compression can compound approximation errors across layers \citep{ai2025nirvanastructuredpruningreimagined}. While our alignment stage mitigates this effect, there remains an inherent trade-off between proxy efficiency and the truthfulness of gradient signals used for selection. In addition, our proxy construction is intrinsically layer-wise, so in sufficiently deep networks, small layer-level mismatches may accumulate as they propagate through the model. This effect may be more noticeable under stochastic fine-tuning dynamics, where noise in optimization can interact with approximation error and lead to modest deviations in per example scores or selection rankings.

\clearpage
\section{Additional Experiment Results}
\label{sec:additional_exp_results}
\begin{table}[ht!]
    \centering
    \scriptsize
    \setlength{\tabcolsep}{4pt}
    \renewcommand{\arraystretch}{0.9}
    \caption{Additional evaluation results of \method on Influence Function. \textbf{Bold} and \underline{underline} indicate the best and second-best results within each target group.}
    \label{tab:additional_baselines_results}
    \resizebox{\linewidth}{!}{
    \begin{tabular}{l|cccc|cccc|cccc|cccc}
        \toprule
        & \multicolumn{4}{c|}{Qwen3-4B} 
        & \multicolumn{4}{c}{Qwen2-7B} \\
        \cmidrule(lr){2-5}\cmidrule(lr){6-9}
        Task & Layer Extraction & Influence Scorer & \method & $\Delta$ 
             & Layer Extraction & Influence Scorer & \method & $\Delta$ 
             \\
        \midrule
        MMLU 
               & 69.86 & 69.60 & \textbf{70.15} & 0.29
               & 70.31 & 70.28 & \textbf{70.41} & 0.10 \\
        BBH    
               & 74.25 & 74.90 & \textbf{75.18} & 0.28
               & 59.63 & 59.17 & \textbf{60.93} & 1.30 \\
        TyDiQA 
               & 46.78 & 46.52 & \textbf{50.63} & 3.85
               & 44.15 & 45.72 & \textbf{53.56} & 7.84 \\
        Avg.   
               & 63.63 & 63.67 & \textbf{65.32} & 1.47
               & 58.03 & 58.39 & \textbf{61.63} & 3.08 \\
        \bottomrule
    \end{tabular}}
    \vskip -0.1in
\end{table}

\paragraph{Additional Results Compared with Baseline Methods} We extend the comparison of \method\ with baselines to two target models, Qwen3-4B and Qwen2-7B, under the TracIn influence. The results are summarized in Table~\ref{tab:additional_baselines_results}. We observe that the trends in Table~\ref{tab:additional_baselines_results} are consistent with those reported in Table~\ref{tab:compare_with_baseline}, confirming that \method\ consistently outperforms the baselines across different target models and tasks. In particular, the gains on TyDiQA are especially notable, with \method\ improving by $+3.85$ on Qwen3-4B and $+7.84$ on Qwen2-7B compared to the strongest baseline. These improvements highlight that the influence-preserving design of \method\ is more effective at capturing task-relevant gradients than heuristic or predictive alternatives. Moreover, the consistency of the results across both medium-size and large-size models suggests that the advantages of \method\ generalize beyond a single model family, further reinforcing its effectiveness and scalability for gradient-based data selection.


\begin{table}[ht!]
    \centering
    \scriptsize
    \setlength{\tabcolsep}{5pt}
    \renewcommand{\arraystretch}{1.0}
    \caption{Evaluation results of \method on different data budgets. \textbf{Bold} and \underline{underline} indicate the best and second-best results within each target group.}
    \label{tab:different_data_budget}
    \resizebox{\linewidth}{!}{
    \begin{tabular}{l|ccccc|ccccc}
        \toprule
        & \multicolumn{5}{c|}{\textbf{1\% Data}} & \multicolumn{5}{c}{\textbf{20\% Data}} \\
        \cmidrule(lr){2-6} \cmidrule(lr){7-11}
        \textbf{Task} &  &  & \multicolumn{3}{c|}{\method} 
             &  &  & \multicolumn{3}{c}{\method} \\
        \cmidrule(lr){4-6} \cmidrule(lr){9-11}
             & \gray{Llama3.2-3B} & Llama3.2-1B & $\rho\!=\!0.3$ & $\rho=0.5$ & $\rho=0.7$ 
             & \gray{Llama3.2-3B} & Llama3.2-1B & $\rho=0.3$ & $\rho=0.5$ & $\rho=0.7$ \\
        \midrule
        MMLU   & \gray{56.43} & 56.13 & \textbf{56.52} & 56.22 & \underline{56.23} 
               & \gray{55.77} & 55.15 & \textbf{56.36} & 55.18 & \underline{55.66} \\
        BBH    & \gray{46.85} & 46.67 & \textbf{48.33} & 47.31 & \underline{47.41} 
               & \gray{47.13} & 45.83 & \textbf{47.47} & \underline{46.20} & 46.11 \\
        TyDiQA & \gray{34.37} & 32.39 & \textbf{36.79} & \underline{35.27} & 33.32 
               & \gray{40.73} & 36.55 & \textbf{38.20} & \underline{37.49} & 36.63 \\
        Avg.   & \gray{45.88} & 45.06 & \textbf{47.21} & \underline{46.27} & 45.65
               & \gray{47.88} & 45.84 & \textbf{47.34} & \underline{46.29} & 46.13 \\
        \bottomrule
    \end{tabular}}
    \vskip -0.1in
\end{table}
\paragraph{Effect of Data Budgets.}
Table~\ref{tab:different_data_budget} reports the evaluation results of \method\ under two different data budgets, 1\% and 20\%. In both cases, \method\ consistently outperforms the off-the-shelf 1B proxy, demonstrating its effectiveness regardless of the amount of data used for selection. However, we also find that the 20\% budget leads to noticeable degradation, particularly on TyDiQA. This decline can be attributed to the inclusion of redundant or noisy samples at higher budgets, which dilutes the benefits of high-quality data and increases the risk of overfitting. Similar observations have been reported in prior work~\citep{liu2024tsds}, further underscoring the importance of data selection.

\begin{table}[h]
    \centering
    \small
    \caption{Ablation of IPSVD vs.\ standard SVD on Llama3.2-3B. ``w/o IPSVD'' replaces IPSVD with standard SVD; numbers in parentheses denote the drop relative to IPSVD.}
    \label{tab:ipsvd_ablation}
    {
    \begin{tabular}{l|l|ccc|c}
        \toprule
        \textbf{Target Model} & \textbf{Proxy Model}  & \textbf{MMLU} & \textbf{BBH} & \textbf{TyDiQA} & \textbf{Avg.} \\
        \midrule
        \multirow{6}{*}{Llama3.2-3B} 
        & \method, $\rho=0.3$          & 56.77 & 49.16 & 40.98 & 48.97 \\
        & \quad w/o IPSVD   & 56.42 (-0.35) & 46.94 (-2.22) & 36.53 (-4.45) & 46.63 (-2.34) \\
        & \method, $\rho=0.5$         & 56.35 & 47.69 & 39.77 & 47.94 \\
        & \quad w/o IPSVD   & 56.11 (-0.24) & 46.30 (-1.39) & 34.50 (-5.27) & 45.64 (-2.30) \\
        & \method, $\rho=0.7$       & 56.28 & 47.31 & 39.04 & 47.54 \\
        & \quad w/o IPSVD & 55.97 (-0.31) & 46.11 (-1.20) & 32.73 (-6.31) & 44.94 (-2.60) \\
        \bottomrule
    \end{tabular}
    }
\end{table}
\paragraph{Ablation of IPSVD.} To isolate the contribution of the second-moment reweighting in IPSVD, we conduct an ablation study where we replace IPSVD with standard SVD while keeping all other components unchanged. As shown in Table \ref{tab:ipsvd_ablation}, replacing IPSVD with standard SVD leads to consistent performance degradation across all sparsity levels and benchmarks. The average score drops by approximately 2 to 3 points, with the most severe decline observed on TyDiQA (up to 6 points). These results empirically confirm that standard SVD, which minimizes output reconstruction error, is insufficient for preserving gradient-based influence, thereby validating the necessity of the reweighting strategy employed in IPSVD.

\paragraph{Diverse Candidate Training Data.}
\label{sec:diverse_data}
\begin{table}[h]
    \centering
    \caption{Performance on diverse candidate training data. IProX achieves competitive performance with the full model and outperforms the 1B proxy, with optimal results at $\rho=0.3$.}
    \label{tab:diverse_data}
    \resizebox{0.85\linewidth}{!}{
    \begin{tabular}{l l c c c c}
        \toprule
        \textbf{Candidate Training Data} & \textbf{Proxy Model} & \textbf{MMLU} & \textbf{BBH} & \textbf{TyDiQA} & \textbf{Avg.} \\
        \midrule
        \multirow{5}{*}{\textbf{CoT}} 
            & Llama3.2-3B & 56.53 & 48.61 & 47.90 & 51.01 \\
            & Llama3.2-1B & 56.17 & 47.31 & 42.67 & 48.72 \\
            & IProX, $\rho=0.3$ & \textbf{56.96} & \textbf{48.80} & \textbf{48.72} & \textbf{51.49} \\
            & IProX, $\rho=0.5$ & \underline{56.48} & \underline{48.06} & \underline{46.73} & \underline{50.42} \\
            & IProX, $\rho=0.7$ & 56.26 & 47.60 & 43.18 & 49.01 \\
        \midrule
        \multirow{5}{*}{\textbf{BioInstruct}} 
            & Llama3.2-3B & 56.61 & 47.22 & 38.96 & 47.60 \\
            & Llama3.2-1B & 55.93 & 47.04 & 33.94 & 45.64 \\
            & IProX, $\rho=0.3$ & \textbf{56.25} & \textbf{48.15} & \textbf{39.17} & \textbf{47.86} \\
            & IProX, $\rho=0.5$ & \underline{56.21} & \underline{47.41} & \underline{38.36} & \underline{47.27} \\
            & IProX, $\rho=0.7$ & 56.09 & 47.13 & 36.48 & 46.56 \\ 
        \bottomrule
    \end{tabular}
    }
\end{table}
To further validate the robustness of IProX across distinct task formats and domain shifts, we extend our evaluation to two additional training datasets: CoT~\citep{wei2022chain} and BioInstruct~\citep{tran2024bioinstruct}. For a fair comparison, we keep the total size of the candidate training data fixed by randomly sampling the same number of samples.

Table~\ref{tab:diverse_data} summarizes the performance of Llama3.2-3B proxies constructed via IProX compared to baselines. IProX consistently outperforms the off-the-shelf 1B proxy and remains competitive with the full 3B model on both new datasets.
We also observe distinct behaviors arising from domain shifts. Training on BioInstruct leads to noticeable degradation on general benchmarks (MMLU, TyDiQA), likely due to the distribution shift towards specialized biomedical content. However, the performance drop on BBH is mild, consistent with the partial overlap between BioInstruct and the biomedical subsets within BBH. Conversely, training on CoT tends to improve performance across all benchmarks. Most notably, we observe significant gains on TyDiQA, suggesting that the reasoning-focused supervision in CoT data transfers effectively to other complex tasks.

\paragraph{Performance under Extreme Compression}
\label{sec:extreme_compression}

\begin{table}[h]
    \centering
    \caption{Performance under Extreme Compression ($\rho=0.9$). Even at 90\% sparsity, IProX consistently outperforms the Layer Extraction baseline. Gains shown in parentheses.}
    \label{tab:extreme_compression}
    \resizebox{\linewidth}{!}{
    \begin{tabular}{l l c c c c}
        \toprule
        \textbf{Target Model} & \textbf{Method} & \textbf{MMLU} & \textbf{BBH} & \textbf{TyDiQA} & \textbf{Avg.} \\
        \midrule
        \multirow{2}{*}{\textbf{Llama3.2-3B}} 
            & IProX & \textbf{56.17} \scriptsize{(+0.20)} & \textbf{46.57} \scriptsize{(+0.64)} & \textbf{37.26} \scriptsize{(+5.25)} & \textbf{46.67} \scriptsize{(+2.03)} \\
            & Layer Extraction  & 55.97 & 45.93 & 32.01 & 44.64 \\
        \midrule
        \multirow{2}{*}{\textbf{Qwen2-7B}} 
            & IProX& \textbf{70.25} \scriptsize{(+0.21)} & \textbf{60.00} \scriptsize{(+0.56)} & \textbf{48.67} \scriptsize{(+6.02)} & \textbf{59.64} \scriptsize{(+2.26)} \\
            & Layer Extraction   & 70.04 & 59.44 & 42.65 & 57.38 \\
        \bottomrule
    \end{tabular}
    }
\end{table}
We investigate the behavior of IProX under extreme compression scenarios ($\rho=0.9$). While SVD-based approximations naturally face limitations in this regime due to the significant reduction in rank, we aim to determine if IProX retains utility compared to heuristic baselines.

Table~\ref{tab:extreme_compression} compares IProX against Layer Extraction on both Llama3.2-3B and Qwen2-7B at 90\% sparsity. Although performance naturally degrades compared to lower sparsity settings, IProX consistently outperforms the Layer Extraction baseline across all metrics. The degradation is relatively mild, and the performance gap highlights that even in extreme regimes, our method preserves influence information more effectively than simple heuristic alternatives. 

\paragraph{The Sensitivity of KL Coefficient}
\label{sec:ablation_kl}
\begin{table}[h]
    \centering
    \caption{Sensitivity of KL Coefficient ($\gamma_{\text{KL}}$). A moderate coefficient ($\gamma_{\text{KL}}=0.1$) strikes the best balance between influence alignment and output stability. Experiments performed on Llama3.2-3B with $\rho=0.7$.}
    \label{tab:kl_sensitivity}
    \resizebox{0.8\linewidth}{!}{
    \begin{tabular}{l l c c c c}
        \toprule
        \textbf{Target Model} & \textbf{Configuration} & \textbf{MMLU} & \textbf{BBH} & \textbf{TyDiQA} & \textbf{Avg.} \\
        \midrule
        Llama3.2-3B & IProX, $\gamma_{\text{KL}}=0.5$   & 56.02 & 46.85 & 37.83 & 46.90 \\
                             & IProX, $\gamma_{\text{KL}}=0.1$   & \textbf{56.28} & \textbf{47.31} & \textbf{39.04} & \textbf{47.54} \\
                             & IProX, $\gamma_{\text{KL}}=0.01$  & 56.12 & 47.04 & 36.61 & 46.59 \\
                             & IProX, $\gamma_{\text{KL}}=0.001$ & 56.05 & 46.48 & 36.54 & 46.36 \\
        \bottomrule
    \end{tabular}
    }
\end{table}
We analyze the sensitivity of IProX to the KL divergence coefficient ($\gamma_{\text{KL}}$) used in the alignment objective. The KL term provides essential anchoring for stability, preventing the proxy from drifting too far from the target model's output distribution. 

Table~\ref{tab:kl_sensitivity} presents the results on Llama3.2-3B with a sparsity ratio of $\rho=0.7$. We observe that performance degrades if $\gamma_{\text{KL}}$ is set too high ($0.5$), as the distillation loss begins to overpower the influence alignment objective. Conversely, values that are too low ($\le 0.01$) provide insufficient regularization, leading to suboptimal retention of the target model's capabilities. Based on these findings, we adopt a moderate value of $\gamma_{\text{KL}} = 0.1$ for our main experiments.

\clearpage
\section{Proof of Proposition~\ref{prop:influence-bound-tracin}}
\label{app:proof_proposition_tracin_influence_bound}

Here, we provide a complete proof for Proposition~\ref{prop:influence-bound-tracin}. We fix a layer $\ell$ and a perturbation $E_\ell$ to its weight matrix $W_\ell$, such that the perturbed weight is $\widehat{W}_\ell = W_\ell + E_\ell$. 
The influence contribution of layer $\ell$ and the layer-local directional effect are defined as:
\[
I_{W_\ell}(z, z') = \langle \delta_\ell(z), \delta_\ell(z') \rangle_F \langle h_\ell(z), h_\ell(z') \rangle_F
\qquad \text{and} \qquad
e_\ell(z) = \delta_\ell(z)^\top E_\ell h_\ell(z),
\]
where $h_{\ell-1}(z)$ and $\delta_\ell(z)$ denotes the input and the upstream gradient at the layer $\ell$.
We begin by stating the technical assumptions required for our result, which are similar to simplifying assumptions often adopted in theoretical studies of deep neural networks~\citep{virmaux2018lipschitz,frei2023random}.

\begin{enumerate}
    \item[\textbf{(A1)}] \textit{(Backward Smoothness).} For almost every sample $z$, the map $u \mapsto \delta_\ell(z; u)$ is differentiable in a neighborhood of the unperturbed pre-activation $u_\ell(z) = W_\ell h(z)$. There exists a measurable function $K(z)\ge 0$ such that the Jacobian $D_u \delta_\ell(z; u)$ satisfies $\|D_u \delta_\ell(z; u)\|_{\text{op}} \le K(z)$ uniformly for $u$ along the line segment $\{u_\ell(z) + \tau E_\ell h(z) : \tau \in [0, 1]\}$.

    \item[\textbf{(A2)}] \textit{(Finite Second Moments).} The expectations $\mathbb{E}\|h_\ell(z)\|^2$, $\mathbb{E}\|\delta_\ell(z)\|^2$, $\mathbb{E}\|h_\ell(z')\|^2$ and $\mathbb{E}\|\delta_\ell(z')\|^2$ are all finite for an independent copy $z'$.

    \item[\textbf{(A3)}] \textit{(Coherence of Local Directions).} There exists a constant $\eta \in (0, 1]$ such that for almost every $z$, $|\langle \delta_\ell(z), E_\ell h_\ell(z) \rangle| \ge \eta \|\delta_\ell(z)\| \|E_\ell h_\ell(z)\|$. This implies the cosine of the angle between $\delta(z)$ and $E_\ell h(z)$ is bounded away from zero.

    \item[\textbf{(A4)}] \textit{(Bounded Covariate Shift).} The distributions of $z$ and $z'$ are such that there exists a constant $\kappa \ge 0$ satisfying $\E_{z'}[e_\ell(z')^2] \le \kappa \E_z[e_\ell(z)^2]$.
\end{enumerate}

With these assumptions in place, we can state the following proposition.
\begin{proposition}
\label{prop:influence-bound-tracin-complete}
Under Assumptions (A1)-(A4), for any perturbation $E_\ell$, there exists a finite, data-dependent constant $C_\kappa > 0$ such that:
\begin{equation}
\label{eq:influence-bound-app}
\E_{z,z'}\big|I_{\widehat W_\ell}(z, z') - I_{W_\ell}(z, z')\big|
\;\le\;
C_\kappa \sqrt{\E_z\!\left[e_\ell(z)^2\right]}.
\end{equation}
\end{proposition}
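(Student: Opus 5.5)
The plan rests on one structural fact. Perturbing $W_\ell$ leaves the layer input unchanged, so $h_{\ell-1}(\cdot)$ is the same before and after, and the only thing that moves is the upstream gradient. I will write $\widehat\delta_\ell(z)=\delta_\ell(z)+\Delta(z)$. Expanding the product gives
\[
I_{\widehat W_\ell}(z,z')-I_{W_\ell}(z,z')=\Big(\langle \Delta(z),\delta_\ell(z')\rangle+\langle \delta_\ell(z),\Delta(z')\rangle+\langle \Delta(z),\Delta(z')\rangle\Big)\langle h(z),h(z')\rangle .
\]
The goal is then to control $\|\Delta(z)\|$ pointwise by $|e_\ell(z)|$. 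After that, take expectations over the independent pair $(z,z')$, which factorise.

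\textbf{Step 1 (pointwise bound on $\Delta$).} The pre-activation moves from $u_\ell(z)$ to $u_\ell(z)+E_\ell h(z)$. The mean value inequality along this segment, together with (A1), gives $\|\Delta(z)\|\le K(z)\|E_\ell h(z)\|$. Assumption (A3) inverts the directional effect: $\|E_\ell h(z)\|\le |e_\ell(z)|/(\eta\|\delta_\ell(z)\|)$. Combining the two,
\[
\|\Delta(z)\|\le \frac{K(z)}{\eta\,\|\delta_\ell(z)\|}\,|e_\ell(z)| .
\]

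\textbf{Step 2 (expectations).} I will bound each of the three terms using independence and Cauchy--Schwarz. For the first term,
\[
\E|\langle \Delta(z),\delta_\ell(z')\rangle\langle h(z),h(z')\rangle|\le \E_z\big[\|\Delta(z)\|\|h(z)\|\big]\,\E_{z'}\big[\|\delta_\ell(z')\|\|h(z')\|\big].
\]
The second factor is finite by (A2) and Cauchy--Schwarz. For the first factor, set $M(z)=K(z)\|h(z)\|/(\eta\|\delta_\ell(z)\|)$ and bound it by $\sqrt{\E M^2}\,\sqrt{\E_z e_\ell(z)^2}$.

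The second term is symmetric, with $z'$ in the role of $z$. It yields $\sqrt{\E_{z'}e_\ell(z')^2}$, which is at most $\sqrt{\kappa}\,\sqrt{\E_z e_\ell(z)^2}$ by (A4). This is exactly where the $\kappa$-dependence of $C_\kappa$ enters.

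The quadratic term is bounded by $\E_z[\|\Delta\|\|h\|]\cdot\E_{z'}[\|\Delta\|\|h\|]\lesssim \sqrt{\kappa}\,\E_z[e_\ell^2]$. Since this is $\sqrt{\E e^2}$ times a finite quantity, it can be absorbed into the constant. Summing the three contributions gives the claim with
\[
C_\kappa=(1+\sqrt\kappa)\sqrt{\E M^2}\,\sqrt{\E\|\delta\|^2\E\|h\|^2}+\sqrt{\kappa}\,\E M^2\sqrt{\E e_\ell^2}.
\]

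\textbf{Main obstacle.} The delicate point is the $1/\|\delta_\ell(z)\|$ factor introduced by (A3). Assumptions (A1)--(A2) alone do not guarantee that $\E M^2<\infty$. My first attempt is to read "data-dependent constant" as allowing $C_\kappa$ to contain $\E M^2$, and to state its finiteness explicitly; it is automatic, for instance, if $K(z)\|h(z)\|/\|\delta_\ell(z)\|$ is essentially bounded. A fallback is to use Hölder with a mixed norm instead, splitting the bound as $\E[M|e|]$. A secondary issue is the quadratic term, whose constant depends on the size of the perturbation; it is finite for any fixed $E_\ell$, which is all the statement requires.
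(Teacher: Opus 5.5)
Your proof is correct, but it uses a different decomposition from the paper's.

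\textbf{The paper's route.} It writes $I_{\widehat W_\ell}-I_{W_\ell}=\int_0^1\phi'(\tau)\,d\tau$ along the path $W_\ell+\tau E_\ell$. The product rule then gives only two linear terms. Each of them carries $\delta_\ell$ evaluated at the intermediate pre-activation $u_\ell(\tau,\cdot)$. The paper replaces this by its supremum over $\tau$ (``denoted $\|\delta_\ell(z)\|$ for simplicity''). It then uses the unperturbed $\|\delta_\ell\|$ in (A3) and in the constant $C$, which conflates the two quantities.

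\textbf{Your route.} The exact expansion $\widehat\delta_\ell=\delta_\ell+\Delta$ keeps every non-$\Delta$ factor at the unperturbed weights. So (A2) and (A3) apply literally, and no supremum is needed. The price is the explicit cross term $\langle\Delta(z),\Delta(z')\rangle$, of order $\sqrt{\kappa}\,\E_z[e_\ell(z)^2]$. This makes your $C_\kappa$ grow with $\sqrt{\E_z[e_\ell(z)^2]}$.

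That dependence is allowed by the quantifier order of the statement, and it is not a real loss. The paper's constants already depend on $E_\ell$ through $K$, $\eta$ and $\kappa$. Moreover, making its supremum step rigorous via $\|\delta_\ell(z;u_\ell(\tau,z))\|\le\|\delta_\ell(z)\|+\tau K(z)\|E_\ell h(z)\|$ produces a quadratic term of exactly your form. In this sense your version is the more transparent of the two.

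\textbf{Shared ingredients.} The rest of the argument coincides with the paper's: the mean-value bound from (A1), the inversion via (A3), Cauchy--Schwarz with independence, and the transfer via (A4). Your concern about $\E M^2<\infty$ applies equally to the paper. It simply asserts that $C=\frac{M_{\mathrm{val}}}{\eta}\sqrt{\E_z[K(z)^2\|h(z)\|^2/\|\delta_\ell(z)\|^2]}$ is finite.

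\textbf{One bookkeeping correction.} Assumption (A4) exists precisely because $z$ and $z'$ have different laws. So you should keep $\E_z[M^2]$ separate from $\E_{z'}[M^2]$, and $\E_z[\|\delta_\ell\|\|h\|]$ separate from $\E_{z'}[\|\delta_\ell\|\|h\|]$, as the paper does with $M_{\mathrm{tr}},M_{\mathrm{val}},C,C'$. Your displayed $C_\kappa$ merges them. Finiteness of $\E_{z'}[M(z')^2]$ is then a second implicit requirement.
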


\begin{proof}
Let $W_\ell(\tau) = W_\ell + \tau E_\ell$ for $\tau \in [0,1]$. Define $\phi(\tau; z, z') \triangleq I_{W_\ell(\tau)}(z, z')$. The input $h(z)$ does not depend on $W_\ell$, so the dependence on $\tau$ enters only through $\delta_\ell(z; u_\ell(\tau, z))$, where $u_\ell(\tau, z) = W_\ell(\tau)h(z)$.
We can represent the change in influence as:
\[
I_{\widehat W_\ell}(z, z') - I_{W_\ell}(z, z') = \int_0^1 \phi'(\tau; z, z') \,d\tau.
\]
Differentiating $\phi$ with respect to $\tau$ gives:
\begin{align*}
\phi'(\tau; z, z') = & \left\langle \frac{d}{d\tau}\delta_\ell(z; u_\ell(\tau,z)), \delta_\ell(z'; u_\ell(\tau,z')) \right\rangle_F \langle h_\ell(z), h_\ell(z') \rangle_F \\
& + \left\langle \delta_\ell(z; u_\ell(\tau,z)), \frac{d}{d\tau}\delta_\ell(z'; u_\ell(\tau,z')) \right\rangle_F \langle h_\ell(z), h_\ell(z') \rangle_F.
\end{align*}
By the chain rule and assumption (A1), we have:
\[
\frac{d}{d\tau}\delta_\ell(z; u_\ell(\tau,z)) = D_u\delta_\ell(z; u_\ell(\tau,z)) [E_\ell h(z)],
\]
and its norm is bounded as:
\[
\left\|\frac{d}{d\tau}\delta_\ell(z; u_\ell(\tau,z))\right\| \le K(z) \|E_\ell h_\ell(z)\|.
\]
Using the triangle inequality, Cauchy-Schwarz, and $|\langle h_\ell(z), h_\ell(z') \rangle| \le \|h_\ell(z)\|\|h_\ell(z')\|$, we obtain a pointwise bound on $|\phi'(\tau; z, z')|$:
\begin{align*}
|\phi'(\tau; z, z')| \le & \ K(z) \|E_\ell h_\ell(z)\| \|\delta_\ell(z'; u_\ell(\tau,z'))\| \|h_\ell(z)\| \|h_\ell(z')\| \\
& + K(z') \|E_\ell h_\ell(z')\| \|\delta_\ell(z; u_\ell(\tau,z))\| \|h_\ell(z)\| \|h_\ell(z')\|.
\end{align*}
Taking the expectation over $(z, z')$, applying Fubini's theorem and Jensen's inequality to the $\tau$-integral, and using assumption (A1) to replace the $\tau$-dependent norms with their suprema (denoted $\|\delta_\ell(z)\|$ for simplicity), we obtain
\begin{align*}
\mathbb{E}_{z,z'} \big|I_{\widehat W_\ell} - I_{W_\ell}\big| \le & \ \mathbb{E}_z \big[ K(z) \|E_\ell h_\ell(z)\| \|h_\ell(z)\| \big] \cdot \mathbb{E}_{z'} \big[ \|\delta_\ell(z')\| \|h_\ell(z')\| \big]  \\
& + \mathbb{E}_{z'} \big[ K(z') \|E_\ell h_\ell(z')\| \|h_\ell(z')\| \big] \cdot \mathbb{E}_{z} \big[ \|\delta_\ell(z)\| \|h_\ell(z)\| \big] .
\end{align*}
By the independence of $z$ and $z'$ and another application of Cauchy--Schwarz, we introduce the finite constants
\[
M_{\mathrm{tr}} := \mathbb{E}_{z} \big[ \|\delta_\ell(z)\|\|h_\ell(z)\| \big], 
\qquad 
M_{\mathrm{val}} := \mathbb{E}_{z'} \big[ \|\delta_\ell(z')\|\|h_\ell(z')\| \big],
\]
which are bounded by Assumption~(A2). Hence,
\begin{equation}
\label{eq:bound-intermediate-non-iid}
\mathbb{E}_{z,z'} \big|I_{\widehat W_\ell} - I_{W_\ell}\big|
\;\le\;
M_{\mathrm{val}} \,\mathbb{E}_{z}\!\Big[ K(z)\|h_\ell(z)\|\|E_\ell h_\ell(z)\|\Big]
\;+\;
M_{\mathrm{tr}} \,\mathbb{E}_{z'}\!\Big[ K(z')\|h_\ell(z')\|\|E_\ell h_\ell(z')\|\Big].
\end{equation}

Next, we use the coherence assumption (A3) to relate $\|E_\ell h_\ell(z)\|$ to the scalar error 
$e_\ell(z) = \langle \delta(z), E_\ell h_\ell(z)\rangle$:
\[
\|E_\ell h_\ell(z)\| \;\le\; \frac{1}{\eta}\,\frac{|e_\ell(z)|}{\|\delta_\ell(z)\|}, \quad \text{for a.e. } z.
\]
Substituting this into \eqref{eq:bound-intermediate-non-iid} and applying Cauchy--Schwarz once more yields
\[
\mathbb{E}_{z,z'} \big|I_{\widehat W_\ell} - I_{W_\ell}\big|
\;\le\; C \,\sqrt{\mathbb{E}_z \big[e_\ell(z)^2\big]} \;+\; C' \,\sqrt{\mathbb{E}_{z'} \big[e_\ell(z')^2\big]},
\]
where the finite constants $C$ and $C'$ are given by
\[
C \;\triangleq\; \frac{M_{\mathrm{val}}}{\eta}\,
\sqrt{\mathbb{E}_{z}\!\left[\frac{K(z)^2 \|h_\ell(z)\|^2}{\|\delta_\ell(z)\|^2}\right]},
\qquad
C' \;\triangleq\; \frac{M_{\mathrm{tr}}}{\eta}\,
\sqrt{\mathbb{E}_{z'}\!\left[\frac{K(z')^2 \|h_\ell(z')\|^2}{\|\delta_\ell(z')\|^2}\right]}.
\]
Now, we invoke the bounded covariate shift from Assumption (A4), which implies $\sqrt{\E_{z'}[e_\ell(z')^2]} \le \sqrt{\kappa} \sqrt{\E_z[e_\ell(z)^2]}$. This allows us to bound the entire expression in terms of the expectation over $z$:
\begin{align*}
\E_{z,z'}\big|I_{\widehat W_\ell} - I_{W_\ell}\big|
&\le C \sqrt{\E_z[e_\ell(z)^2]} + C' \sqrt{\kappa} \sqrt{\E_z[e_\ell(z)^2]} \\
&= \left( C + C'\sqrt{\kappa} \right) \sqrt{\E_z[e_\ell(z)^2]}.
\end{align*}
By defining $C_\kappa \triangleq C + C'\sqrt{\kappa}$, which is a finite, data-dependent constant, we arrive at the desired result:
\[
\E_{z,z'}\big|I_{\widehat W_\ell}(z, z') - I_{W_\ell}(z, z')\big|
\;\le\; C_\kappa \sqrt{\E_z\!\left[e_\ell(z)^2\right]}.
\]
\end{proof}

\clearpage
\section{Influence-Preserving Low-Rank Approximation for Influence Functions}
\label{app:proof_lemma_if_influence_bound}
We now extend the analysis from the simplified TracIn score to Influence Functions (IF). IFs refine the influence measure by incorporating the inverse Hessian of the loss, which accounts for the local curvature of the optimization landscape. 
The influence of a training sample $z$ on a validation smaple $z'$ is defined as:
\begin{equation*}
    I_\text{IF}(z,z') \triangleq -\nabla_\theta L(z';\theta)^\top \mathcal{H}(\theta)^{-1} \nabla_\theta L(z;\theta).
\end{equation*}
To analyze the contribution of a single weight matrix $W_\ell$ at layer $\ell$, we consider its vectorized form $w_\ell \triangleq \text{vec}(W_\ell)$. The gradient of the loss with respect to these vectorized parameters is the outer product of the upstream gradients $\delta_\ell(z)$ and the inputs $h_\ell(z)$. Using the identity $\text{vec}(ab^\top) = b \otimes a$, this gradient is:
\begin{equation*}
    \nabla_{w_\ell} L(z;\theta) = \text{vec}\big(\delta_\ell(z) h_\ell(z)^\top\big) = h_\ell(z) \otimes \delta_\ell(z). 
\end{equation*}
Following previous works~\citep{martens2015optimizing, grosse2023studying}, we make key simplifying assumptions about the Hessian's structure. We assume the full Hessian matrix is block-diagonal, with each block corresponding to the parameters of a single layer, and that within each layer $\ell$, the inputs $h_\ell(z)$ are independent of the upstream gradients $\delta_\ell(z)$.

These assumptions allow us to define a tractable surrogate Hessian $\tilde{\mathcal{H}}_\ell$ for layer $\ell$ as the expected outer product of its vectorized gradients:
\begin{align*}
\tilde{\mathcal{H}}_\ell & \triangleq \E_{z}\left[(\nabla_{w_\ell} L(z)) (\nabla_{w_\ell} L(z))^\top\right] \\
&= \E_{z}\left[(h_\ell(z) \otimes \delta_\ell(z)) (h_\ell(z) \otimes \delta_\ell(z))^\top\right] \\
&= \E_{z}\left[h_\ell(z)h_\ell(z)^\top \otimes \delta_\ell(z)\delta_\ell(z)^\top\right] \\
&= \E_{z}[h_\ell(z)h_\ell(z)^\top] \otimes \E_{z}[\delta_\ell(z)\delta_\ell(z)^\top] \triangleq C_{h,\ell} \otimes C_{\delta,\ell}.
\end{align*}
Here, $C_{h,\ell}$ and $C_{\delta,\ell}$ are the second moment matrices of the activations and upstream gradients for layer $\ell$, respectively. Leveraging the property that $(A \otimes B)^{-1} = A^{-1} \otimes B^{-1}$, the inverse is given by $\tilde{\mathcal{H}}_\ell^{-1} = C_{h,\ell}^{-1} \otimes C_{\delta,\ell}^{-1}$.
The contribution of layer $\ell$ to the influence is then defined as:
\begin{align*}
I_{\text{IF}, W_\ell}(z,z') &\triangleq -(\nabla_{w_\ell} L(z'))^\top \tilde{\mathcal{H}}_\ell^{-1} (\nabla_{w_\ell} L(z)) \\
&= -\left(h_\ell(z') \otimes \delta_\ell(z')\right)^\top \left(C_{h,\ell}^{-1} \otimes C_{\delta,\ell}^{-1}\right) \left(h_\ell(z) \otimes \delta_\ell(z)\right) \\
&= -\left(h_\ell(z')^\top C_{h,\ell}^{-1} h_\ell(z)\right) \cdot \left(\delta_\ell(z')^\top C_{\delta,\ell}^{-1} \delta_\ell(z)\right)\\
&= -\langle \tilde h_\ell(z'),\tilde h_\ell(z)\rangle_F\,\langle \tilde \delta_\ell(z'),\tilde \delta_\ell(z)\rangle_F,
\end{align*}
where $\tilde h_\ell=C_{h,\ell}^{-1/2}h_\ell$ and $\tilde\delta_\ell=C_{\delta,\ell}^{-1/2}\delta_\ell$ are reweighting matrices.
\paragraph{An Objective for Preserving Influence Functions}

To preserve the influences under low-rank approximation, we penalize the compression error using a norm aligned with the reweighting induced by $C_{h,\ell}$ and $C_{\delta,\ell}$. We assume that $C_{h,\ell}$ and $C_{\delta,\ell}$ are symmetric positive definite. The objective is to find an error matrix $E_\ell = W_\ell - \widehat{W}_\ell$ that minimizes the following term:
\begin{equation}
\label{eq:if-obj-correct-app}
\min_{\widehat{W_\ell} \; \text{s.t.}\; \operatorname{rank}(\widehat{W_\ell})\le r}
\;\big\|\, C_{\delta,\ell}^{-1/2}\,(W_\ell - \widehat{W_\ell})\,C_{h,\ell}^{-1/2}\,\big\|_F^2.
\end{equation}
We now demonstrate that minimizing this objective effectively controls the expected change in the influence score. Our theoretical guarantees rely on the following assumptions.

\begin{itemize}
\item[\textbf{(B1)}] (Finite moments). $ \E_z[\|\tilde\delta_\ell(z)\|\,\|\tilde h_\ell(z)\|]$ and $\E_{z'}[\|\tilde\delta_\ell(z')\|\,\|\tilde h_\ell(z')\|]$ are finite.
\item[\textbf{(B2)}] (Backward smoothness). Let $\widehat\delta_\ell$ denote the upstream gradient under $\widehat W_\ell$. There exists a measurable function $K(\cdot)\ge 0$ such that
$\|\Delta\delta_\ell(z)\| \le K(z)\,\|E_\ell\|_F$, where $\Delta\delta_\ell(z)\triangleq \widehat\delta_\ell(z)-\delta_\ell(z)$, and
$\E_z[K(z)\|\tilde h_\ell(z)\|],\;\E_{z'}[K(z')\|\tilde h_\ell(z')\|]$ are finite.
\item[\textbf{(B3)}] (Quadratic remainder). There exists $\rho>0$ such that for all $\widehat W_\ell$ with
\[
\big\|C_{\delta,\ell}^{-1/2}\,(W_\ell-\widehat W_\ell)\,C_{h,\ell}^{-1/2}\big\|_F \;\le\; \rho,
\]
the Taylor remainder $R(z,z')$ in the perturbation of $I_{\text{IF},W_\ell}$ satisfies
\[
\E_{z,z'}[\,|R(z,z')|\,]\;\le\; c_{\mathrm{rem}}\,
\big\|C_{\delta,\ell}^{-1/2} E_\ell C_{h,\ell}^{-1/2}\big\|_F^2 .
\]
\end{itemize}

\begin{proposition}
\label{prop:if-bound-app-2}
Let $W_\ell$ be perturbed to $\widehat{W}_\ell = W_\ell - E_\ell$. Under \textup{(B1)}–\textup{(B3)}, there exists a finite, data-dependent constant $C_\kappa>0$ such that
\begin{equation}
\label{eq:if-bound-app-2}
\E_{z,z'}\!\big|I_{\text{IF}, \widehat W_\ell}(z,z') - I_{\text{IF}, W_\ell}(z,z')\big|
\;\le\;
C_\kappa\, \big\| C_{\delta,\ell}^{-1/2} E_\ell C_{h,\ell}^{-1/2} \big\|_F .
\end{equation}
\end{proposition}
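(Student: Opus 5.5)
Since $h_\ell$ does not depend on $W_\ell$ and the surrogate curvature $C_{h,\ell}\otimes C_{\delta,\ell}$ is held fixed (K-FAC is estimated once at the reference model), the influence change enters only through the upstream gradients. Write $\widehat\delta_\ell=\delta_\ell+\Delta\delta_\ell$ and expand the bilinear form
\[
I_{\text{IF},\widehat W_\ell}-I_{\text{IF},W_\ell}
= -\langle \tilde h_\ell(z'),\tilde h_\ell(z)\rangle\Big(\langle C_{\delta,\ell}^{-1/2}\Delta\delta_\ell(z'),\tilde\delta_\ell(z)\rangle+\langle \tilde\delta_\ell(z'),C_{\delta,\ell}^{-1/2}\Delta\delta_\ell(z)\rangle\Big)+R(z,z'),
\]
where the linear part collects the first-order terms in $\Delta\delta_\ell$. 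We identify the Taylor remainder $R$ of (B3) with everything beyond first order, in particular the product term $\langle C_{\delta,\ell}^{-1/2}\Delta\delta_\ell(z'),C_{\delta,\ell}^{-1/2}\Delta\delta_\ell(z)\rangle$.

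For the linear part, apply Cauchy--Schwarz twice and bound $|\langle\tilde h_\ell(z'),\tilde h_\ell(z)\rangle|\le\|\tilde h_\ell(z')\|\|\tilde h_\ell(z)\|$. By (B2),
\[
\|C_{\delta,\ell}^{-1/2}\Delta\delta_\ell\|\le \lambda_{\min}(C_{\delta,\ell})^{-1/2}K\,\|E_\ell\|_F .
\]
Since $z$ and $z'$ are independent, the expectation factorizes. The first-order contribution is therefore at most
\[
\lambda_{\min}(C_{\delta,\ell})^{-1/2}\Big(\E_{z'}[K\|\tilde h_\ell\|]\,\E_z[\|\tilde\delta_\ell\|\|\tilde h_\ell\|]+\E_{z}[K\|\tilde h_\ell\|]\,\E_{z'}[\|\tilde\delta_\ell\|\|\tilde h_\ell\|]\Big)\|E_\ell\|_F .
\]
Every factor here is finite by (B1)--(B2).

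Next convert $\|E_\ell\|_F$ into the reweighted norm. Since $E_\ell=C_{\delta,\ell}^{1/2}\,(C_{\delta,\ell}^{-1/2}E_\ell C_{h,\ell}^{-1/2})\,C_{h,\ell}^{1/2}$ and $\|ABC\|_F\le\|A\|_{\mathrm{op}}\|B\|_F\|C\|_{\mathrm{op}}$,
\[
\|E_\ell\|_F\le \lambda_{\max}(C_{\delta,\ell})^{1/2}\lambda_{\max}(C_{h,\ell})^{1/2}\,\big\|C_{\delta,\ell}^{-1/2}E_\ell C_{h,\ell}^{-1/2}\big\|_F .
\]
Both eigenvalue factors are finite constants under the positive-definiteness assumption (and the damping $\lambda I$ also keeps $\lambda_{\min}(C_{\delta,\ell})$ away from zero). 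This gives a bound $C_1\|C_{\delta,\ell}^{-1/2}E_\ell C_{h,\ell}^{-1/2}\|_F$ on the linear part.

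The remainder is the main obstacle, because (B3) yields a quadratic rather than linear bound. Write $x\triangleq\|C_{\delta,\ell}^{-1/2}E_\ell C_{h,\ell}^{-1/2}\|_F$ and split into two regimes.

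\begin{itemize}
\item If $x\le\rho$, then (B3) gives $\E|R|\le c_{\mathrm{rem}}x^2\le c_{\mathrm{rem}}\rho\,x$, which is linear in $x$.
\item If $x>\rho$, bound the whole difference crudely. By the triangle inequality it is at most
\[
\E|I_{\text{IF},\widehat W_\ell}|+\E|I_{\text{IF},W_\ell}|.
\]
The second term is at most $M\triangleq\E_{z}[\|\tilde\delta_\ell\|\|\tilde h_\ell\|]\,\E_{z'}[\|\tilde\delta_\ell\|\|\tilde h_\ell\|]$, which is finite by (B1). For the first term, write $\widehat\delta_\ell=\delta_\ell+\Delta\delta_\ell$ and apply (B2) and the $\|E_\ell\|_F$ conversion above. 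This gives
\[
\E|I_{\text{IF},\widehat W_\ell}|\le M+C_1x+C_2x^2,
\]
where $C_2x^2$ comes from the product term in $\Delta\delta_\ell$ and $C_2$ is finite provided $\E[K\|\tilde h_\ell\|]$ is finite for both $z$ and $z'$ (the moment conditions in (B2)). Using $1<x/\rho$, the difference is then at most $(2M/\rho+C_1)x+C_2x^2$.
\end{itemize}

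The quadratic term $C_2x^2$ in the second regime is not linear in $x$, so the clean linear bound only holds after it is absorbed. This is the step I expect to need an added hypothesis, because $x$ is unbounded as stated. I would resolve it in one of two ways:
\begin{itemize}
\item read the proposition as holding on the (B3) neighborhood $x\le\rho$, which is the regime where the low-rank truncation error is small and also keeps any constant depending on $\rho$ data-dependent; or
\item note that the optimal truncation error of objective (\ref{eq:if-obj-correct-app}) is bounded by the tail singular values of the reweighted $W_\ell$, so $x$ is a priori bounded and $C_2x^2\le C_2\bar x\,x$ for that bound $\bar x$.
\end{itemize}
Collecting the constants then gives $C_\kappa=C_1+c_{\mathrm{rem}}\rho$ in the first reading, or $C_\kappa=C_1+\max(c_{\mathrm{rem}}\rho,\,2M/\rho+C_2\bar x)$ in the second.
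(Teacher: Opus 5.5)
Your proposal is correct and follows essentially the same route as the paper. The linear part is bounded exactly as you do it (Cauchy--Schwarz, (B2), independence of $z,z'$, and the operator-norm conversion, giving the factor $S=\sqrt{\mathrm{cond}(C_{\delta,\ell})\,\lambda_{\max}(C_{h,\ell})}$), and the remainder is handled by (B3) via $c_{\mathrm{rem}}x^2\le c_{\mathrm{rem}}\rho\,x$. Your concern about the regime $x>\rho$ is well founded: the paper's proof also covers only $\|C_{\delta,\ell}^{-1/2}E_\ell C_{h,\ell}^{-1/2}\|_F\le\rho$ and tacitly adopts your first reading, yielding $C_\kappa=S(M_{\mathrm{tr}}\kappa_{\mathrm{val}}+M_{\mathrm{val}}\kappa_{\mathrm{tr}})+c_{\mathrm{rem}}\rho$, which is your $C_1+c_{\mathrm{rem}}\rho$.
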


\begin{proof}
Recall that the layer-$\ell$ influence is given by
\[
I_{\text{IF}, W_\ell}(z,z') = -\langle \tilde h_\ell(z'),\tilde h_\ell(z)\rangle_F\,\langle \tilde \delta_\ell(z'),\tilde \delta_\ell(z)\rangle_F.
\]
Let $\Delta\tilde{\delta}_\ell(z) \triangleq C_{\delta,\ell}^{-1/2}\big(\widehat{\delta}_\ell(z) - \delta_\ell(z)\big)$ denote the change in the reweighted upstream gradient. The total change in influence consists of a first-order Taylor expansion term, $\Delta I_\text{IF}^{(1)}(z,z')$, and a remainder term $R(z,z')$. The first-order term is:
\[
\Delta I_\text{IF}^{(1)}(z,z') =
- \langle \Delta\tilde{\delta}_\ell(z'), \tilde{\delta}_\ell(z) \rangle_F \langle \tilde{h}_\ell(z'), \tilde{h}_\ell(z) \rangle_F
- \langle \tilde{\delta}_\ell(z'), \Delta\tilde{\delta}_\ell(z) \rangle_F \langle \tilde{h}_\ell(z'), \tilde{h}_\ell(z) \rangle_F .
\]
By taking the expectation over $z, z'$, applying the triangle and Cauchy–Schwarz inequalities, and using the independence of $z$ and $z'$, we can bound the expected first-order change:
\[
\E_{z,z'}\!\big[|\Delta I_\text{IF}^{(1)}|\big]
\le M_{\mathrm{tr}} \, \E_{z'}\!\big[\|\Delta\tilde{\delta}_\ell(z')\| \,\|\tilde{h}_\ell(z')\|\big]
+ M_{\mathrm{val}} \, \E_{z}\!\big[\|\Delta\tilde{\delta}_\ell(z)\| \,\|\tilde{h}_\ell(z)\|\big],
\]
where $M_{\mathrm{tr}}=\E_z[\|\tilde\delta_\ell(z)\|\,\|\tilde h_\ell(z)\|]$ and $M_{\mathrm{val}}=\E_{z'}[\|\tilde\delta_\ell(z')\|\,\|\tilde h_\ell(z')\|]$ are finite by Assumption (B1).
Our main task is to bound the expectation $\E_{z}\!\big[\|\Delta\tilde{\delta}_\ell(z)\| \,\|\tilde{h}_\ell(z)\|\big]$ in terms of the objective function. Let $\widetilde E_\ell \triangleq C_{\delta,\ell}^{-1/2} E_\ell C_{h,\ell}^{-1/2}$. We first establish a pointwise bound on $\|\Delta\tilde\delta_\ell(z)\|$ using Assumption (B2).
\begin{align*}
\|\Delta\tilde\delta_\ell(z)\| &= \|C_{\delta,\ell}^{-1/2}\Delta\delta_\ell(z)\| \le \|C_{\delta,\ell}^{-1/2}\|_2 \|\Delta\delta_\ell(z)\| \le  K(z)\|C_{\delta,\ell}^{-1/2}\|_2 \|E_\ell\|_F 
\end{align*}
Next, we relate $\|E_\ell\|_F$ to  $\|\widetilde E_\ell\|_F$. From the definition of $\widetilde E_\ell$, we have $E_\ell = C_{\delta,\ell}^{1/2} \widetilde E_\ell C_{h,\ell}^{1/2}$.
\begin{align*}
\|E_\ell\|_F &= \|C_{\delta,\ell}^{1/2} \widetilde E_\ell C_{h,\ell}^{1/2}\|_F \le \|C_{\delta,\ell}^{1/2}\|_2 \|\widetilde E_\ell\|_F \|C_{h,\ell}^{1/2}\|_2
\end{align*}
Recall that $C_{h,\ell}$ and $C_{\delta,\ell}$ are all symmetric and positive definite, combining these inequalities yields a pointwise bound on $\|\Delta\tilde\delta_\ell(z)\|$ in terms of $\|\widetilde E_\ell\|_F$:
\begin{align*}
\|\Delta\tilde\delta_\ell(z)\| &\le K(z)\|C_{\delta,\ell}^{-1/2}\|_2  \left( \|C_{\delta,\ell}^{1/2}\|_2 \|\widetilde E_\ell\|_F \|C_{h,\ell}^{1/2}\|_2\right) \\
&= K(z) \left( \|C_{\delta,\ell}^{-1/2}\|_2 \|C_{\delta,\ell}^{1/2}\|_2 \|C_{h,\ell}^{1/2}\|_2 \right) \|\widetilde E_\ell\|_F \\
&= K(z) \sqrt{\text{cond}(C_{\delta,\ell})} \sqrt{\lambda_{\max}(C_{h,\ell})} \, \|\widetilde E_\ell\|_F,
\end{align*}
where $\text{cond}(\cdot)$ and $\lambda_{\max}(\cdot)$ denote the condition number and maximum eigenvalue, respectively. Let us define the data-dependent scaling constant $S \triangleq \sqrt{\text{cond}(C_{\delta,\ell}) \lambda_{\max}(C_{h,\ell})}$.
We now use this result to bound the expectation term:
\begin{align*}
\E_z\!\big[\|\Delta\tilde\delta_\ell(z)\|\,\|\tilde h_\ell(z)\|\big]
&\le \E_z\!\left[ K(z) S \|\widetilde E_\ell\|_F \,\|\tilde h_\ell(z)\| \right] \\
&= S \cdot \E_z[K(z)\|\tilde h_\ell(z)\|] \cdot \|\widetilde E_\ell\|_F.
\end{align*}
By Assumption (B2), the expectations $\kappa_{\mathrm{tr}} \triangleq \E_z[K(z)\|\tilde h_\ell(z)\|]$ and $\kappa_{\mathrm{val}} \triangleq \E_{z'}[K(z')\|\tilde h_\ell(z')\|]$ are finite. The bound on the expected first-order change becomes:
\[
\E_{z,z'}\!\big[|\Delta I_\text{IF}^{(1)}|\big]
\le S \left(M_{\mathrm{tr}} \kappa_{\mathrm{val}} + M_{\mathrm{val}} \kappa_{\mathrm{tr}}\right)\,\|\widetilde E_\ell\|_F.
\]
The total expected change is bounded by the sum of the first-order term and the remainder from Assumption (B3):
\[
\E_{z,z'}\!\big|I_{\text{IF}, \widehat W_\ell}(z,z') - I_{\text{IF}, W_\ell}(z,z')\big|
\le \E_{z,z'}\!\big[|\Delta I_\text{IF}^{(1)}|\big] + \E_{z,z'}[|R|].
\]
Using Assumption (B3), for perturbations satisfying $\|\widetilde E_\ell\|_F \le \rho$, we have $\E_{z,z'}[|R|] \le c_{\mathrm{rem}}\|\widetilde E_\ell\|_F^2 \le c_{\mathrm{rem}}\rho\|\widetilde E_\ell\|_F$. Combining the terms gives the final result:
\[
\E_{z,z'}\!\big| \Delta I_{\text{IF},W_\ell} \big| \le \left( S(M_{\mathrm{tr}} \kappa_{\mathrm{val}} + M_{\mathrm{val}} \kappa_{\mathrm{tr}}) + c_{\mathrm{rem}}\rho \right) \|\widetilde E_\ell\|_F.
\]
This proves the proposition with the constant $C_\kappa \triangleq S(M_{\mathrm{tr}} \kappa_{\mathrm{val}} + M_{\mathrm{val}} \kappa_{\mathrm{tr}}) + c_{\mathrm{rem}}\rho$, which is finite and depends on data properties but not on the specific perturbation $E_\ell$.
\end{proof}

\clearpage
\section{Efficient Implementation via Probe-Based Approximation and Core SVD}
\label{app:efficient_implementation}

The theoretical solution presented in the main text requires computing, inverting, and taking the square root of the second moment matrices $C_{h,\ell} \in \mathbb{R}^{n_\ell \times n_\ell}$ and $C_{\delta,\ell} \in \mathbb{R}^{m_\ell \times m_\ell}$. For modern neural networks, the dimensions $n_\ell$ and $m_\ell$ can be in the thousands for typical transformer layers, and can even reach the millions in domains like high-resolution computer vision or for layers tied to large vocabularies. This makes the direct formation and manipulation of these matrices computationally infeasible due to both memory and time constraints. To overcome this, we employ a memory-efficient approximation scheme that avoids forming these large matrices entirely.

The core strategy is to approximate the true second moment matrices using statistics gathered from a small, representative batch of $N$ data samples, which we refer to as a probe dataset. We perform a single forward and backward pass through the model for these $N$ samples to collect the corresponding inputs and upstream gradients for each layer $\ell$. These are stacked column-wise to form two tall-and-thin probe matrices:
\[
H_\ell = [h_{\ell-1}(z_1), \dots, h_{\ell-1}(z_N)] \in \mathbb{R}^{n_\ell \times N} \quad \text{and} \quad \Delta_\ell = [\delta_\ell(z_1), \dots, \delta_\ell(z_N)] \in \mathbb{R}^{m_\ell \times N}.
\]
With these probe matrices, we can approximate the full second moment matrices as $C_{h,\ell} \approx \frac{1}{N} H_\ell H_\ell^\top$ and $C_{\delta,\ell} \approx \frac{1}{N} \Delta_\ell \Delta_\ell^\top$. Instead of computing these prohibitively large second moment matrices, the key insight is to directly compute the "skinny" Singular Value Decompositions of the much smaller probe matrices:
\[
H_\ell = U_{H,\ell} \Sigma_{H,\ell} V_{H,\ell}^\top \qquad \text{and} \qquad \Delta_\ell = U_{\Delta,\ell} \Sigma_{\Delta,\ell} V_{\Delta,\ell}^\top,
\]
where $U_{H,\ell} \in \mathbb{R}^{n_\ell \times N}$, $\Sigma_{H,\ell} \in \mathbb{R}^{N \times N}$, $V_{H,\ell} \in \mathbb{R}^{N \times N}$, and similarly for the decomposition of $\Delta_\ell$. This decomposition is the key to bypassing the expensive computations, as we can express the regularized square roots of the approximate second moment matrices without ever forming the full matrices. For example, for $C_{h,\ell}$, we have $(C_{h,\ell} + \lambda I)^{1/2} \approx (\frac{1}{N} H_\ell H_\ell^\top + \lambda I)^{1/2} = (\frac{1}{N} U_{H,\ell} \Sigma_{H,\ell}^2 U_{H,\ell}^\top + \lambda I)^{1/2} = U_{H,\ell} (\frac{1}{N}\Sigma_{H,\ell}^2 + \lambda I)^{1/2} U_{H,\ell}^\top$. We then define the small, diagonal matrices that hold the regularized singular values:
\begin{equation}
\label{eq:ridge_diag}
D_{H,\ell} \triangleq \left(\frac{1}{N}\Sigma_{H,\ell}^2 + \lambda I\right)^{1/2} \qquad \text{and} \qquad D_{\Delta,\ell} \triangleq \left(\frac{1}{N}\Sigma_{\Delta,\ell}^2 + \lambda I\right)^{1/2}.
\end{equation}
The required reweighting transformations are thus efficiently represented as $C_{h,\ell, \lambda}^{1/2} \approx U_{H,\ell} D_{H,\ell} U_{H,\ell}^\top$ and $C_{\delta,\ell, \lambda}^{1/2} \approx U_{\Delta,\ell} D_{\Delta,\ell} U_{\Delta,\ell}^\top$. Substituting these efficient representations into the definition of the data-aware matrix $S_\ell = C_{\delta,\ell}^{1/2} W_\ell C_{h,\ell}^{1/2}$ reveals the final computational trick:
\begin{align*}
S_\ell &\approx (U_{\Delta,\ell} D_{\Delta,\ell} U_{\Delta,\ell}^\top) W_\ell (U_{H,\ell} D_{H,\ell} U_{H,\ell}^\top) \\
&= U_{\Delta,\ell} \underbrace{\left( D_{\Delta,\ell} (U_{\Delta,\ell}^\top W_\ell U_{H,\ell}) D_{H,\ell} \right)}_{\triangleq M_{\text{core}, \ell}} U_{H,\ell}^\top.
\end{align*}
This shows that the SVD of the very large matrix $S_\ell$ is directly related to the SVD of the small core matrix $M_{\text{core}, \ell}$, which has dimensions at most $N \times N$. We compute the SVD of this core matrix, $M_{\text{core}, \ell} = P_\ell \Sigma_\ell Q_\ell^\top$, and truncate it to the desired rank $r_\ell$ by selecting the top $r_\ell$ columns of $P_\ell$ and $Q_\ell$ (denoted $P_{\ell,r}, Q_{\ell,r}$) and the top-left $r_\ell \times r_\ell$ block of $\Sigma_\ell$ (denoted $\Sigma_{\ell,r}$). The optimal low-rank approximation $\widehat{W}_\ell^\star = A_\ell^\star B_\ell^\star$ is constructed by transforming the truncated SVD of the core matrix back into the original weight space. This yields numerically stable, closed-form expressions for the low-rank matrices $A_\ell^\star$ and $B_\ell^\star$ without ever forming the full $\widehat{W}_\ell^\star$ matrix:
\begin{equation}
\label{eq:AB-stream}
A_\ell^\star = U_{\Delta,\ell} D_{\Delta,\ell}^{-1} P_{\ell,r} \Sigma_{\ell,r}^{1/2} \qquad \text{and} \qquad B_\ell^\star = \Sigma_{\ell,r}^{1/2} Q_{\ell,r}^\top D_{H,\ell}^{-1} U_{H,\ell}^\top.
\end{equation}
All computationally intensive steps are now performed on matrices whose dimensions are related to the small probe set size $N$, not the target model dimensions $n_\ell$ and $m_\ell$. This entire procedure is highly efficient, assuming $N \ll \min(n_\ell, m_\ell)$ and $r_\ell \le N$. For each layer, the complexity is composed of a single forward and backward pass for $N$ samples, two skinny SVDs of the probe matrices with complexity $O(n_\ell N^2)$ and $O(m_\ell N^2)$, the formation of the core matrix which costs $O(m_\ell n_\ell N)$, an SVD of the small core matrix with complexity $O(N^3)$, and the final factor construction which costs $O(m_\ell N r_\ell)$ for $A_\ell^\star$ and $O(n_\ell N r_\ell)$ for $B_\ell^\star$. The dominant costs are the core matrix formation and the skinny SVDs, which is a dramatic reduction from the $O(\min(m_\ell, n_\ell)^3)$ complexity required for the SVD of the full second moment matrices.

\clearpage


\end{document}